\documentclass{article}
\PassOptionsToPackage{numbers,sort&compress}{natbib}
\usepackage[dblblindworkshop, final]{neurips_2026}
\workshoptitle{The 6th Workshop on Mathematical Reasoning and AI (MATH-AI)}

\usepackage[utf8]{inputenc}
\usepackage[T1]{fontenc}
\usepackage{amsmath,amsfonts,amssymb,amsthm}
\usepackage{booktabs}
\usepackage{graphicx}
\usepackage{microtype}
\usepackage{xcolor}
\usepackage{xspace}
\usepackage{hyperref}
\hypersetup{colorlinks=true,linkcolor=black,citecolor=black,urlcolor=black}
\usepackage{url}
\usepackage{needspace}
\usepackage[most]{tcolorbox}

\newtheoremstyle{appendixplain}
  {12pt plus 2pt minus 1pt}{10pt plus 2pt minus 1pt}
  {\itshape\setlength{\parskip}{4pt}}{}
  {\bfseries}{.}{\newline}
  {\thmname{#1}\thmnumber{ #2}\thmnote{\normalfont\ (#3)}}
\newtheoremstyle{appendixdefinition}
  {12pt plus 2pt minus 1pt}{10pt plus 2pt minus 1pt}
  {\normalfont\setlength{\parskip}{4pt}}{}
  {\bfseries}{.}{\newline}
  {\thmname{#1}\thmnumber{ #2}\thmnote{\normalfont\ (#3)}}
\theoremstyle{appendixplain}
\newtheorem{theorem}{Theorem}[section]
\newtheorem{proposition}[theorem]{Proposition}
\newtheorem{lemma}[theorem]{Lemma}
\newtheorem{corollary}[theorem]{Corollary}
\theoremstyle{appendixdefinition}
\newtheorem{definition}[theorem]{Definition}
\newtheorem{assumption}[theorem]{Assumption}
\AddToHook{env/theorem/before}{\Needspace{6\baselineskip}}
\AddToHook{env/proposition/before}{\Needspace{6\baselineskip}}
\AddToHook{env/lemma/before}{\Needspace{6\baselineskip}}
\AddToHook{env/corollary/before}{\Needspace{6\baselineskip}}
\AddToHook{env/definition/before}{\Needspace{6\baselineskip}}
\AddToHook{env/assumption/before}{\Needspace{13\baselineskip}}
\AddToHook{env/proof/before}{\Needspace{5\baselineskip}}
\makeatletter
\renewenvironment{proof}[1][\proofname]{%
  \par\pushQED{\qed}\normalfont
  \topsep6pt\relax\trivlist
  \item[\hskip\labelsep\itshape #1\@addpunct{.}]\mbox{}\par\nobreak
  \vspace{3pt}\noindent\ignorespaces
}{%
  \popQED\endtrivlist\@endpefalse
}
\makeatother
\newcommand{\GeoLib}{\textsc{GeoLib}\xspace}
\newcommand{\GeoDL}{\textsc{GeoDL}\xspace}
\newcommand{\OmniGeo}{\textsc{Omni-Geo}\xspace}
\newcommand{\gpred}[1]{\operatorname{#1}}

\newcommand{\GoalCheck}{\operatorname{CheckGoal}}

\newcommand{\ind}{\mathbb{I}}

\newtcblisting{promptbox}[1]{listing only,enhanced,breakable,colback=white,
  colframe=black,boxrule=.55pt,arc=0pt,title={#1},title after break={#1
  (continued)},colbacktitle=white,coltitle=black,fonttitle=\bfseries\footnotesize,
  left=4pt,right=4pt,top=2pt,bottom=2pt,before skip=4pt,after skip=4pt,
  listing options={basicstyle=\ttfamily\fontsize{7.5pt}{8.2pt}\selectfont,
  breaklines=true,columns=fullflexible,keepspaces=true,showstringspaces=false}}

\title{Dense Neuro-Symbolic Reasoning\\in a Unified Geometry State}

\author{
Ruoran Xu \quad Wending Gao \quad Haoyu Cheng \quad Xiaoqiang Kang \quad
Qiufeng Wang\thanks{Corresponding author: \texttt{qiufeng.wang@xjtlu.edu.cn}}\\
Xi'an Jiaotong-Liverpool University
}

\begin{document}
\maketitle
\setcounter{footnote}{0}

\begin{abstract}
Geometry reasoning is naturally stateful: solving a problem repeatedly alternates
between structural proposals and exact deductions. We formulate this process as
\emph{dense neural--symbolic coupling}, in which neural guidance and symbolic
execution share a typed state and communicate through executable actions at every
search step. Neural proposals contribute theorem instances, constructions, and
algebraic bridges; the symbolic runtime applies registered rules, propagates exact
constraints, and records provenance. A nested controller allocates computation first
between neural and symbolic proposal sources and then among admitted actions. We
instantiate the framework in \OmniGeo, a single solver for plane, analytic (including
conics), and solid geometry. With Claude Sonnet 4.6, \OmniGeo reaches 94.2\%,
88.5\%, and 89.8\% on FormalGeo7K, Conic10K, and SolidFGeo, respectively
(90.8\% macro average), and solves 21/30 IMO-AG-30 problems.
\end{abstract}

\section{Geometry reasoning as dense neural--symbolic interaction}
\label{sec:introduction}

Geometry reasoning unfolds through intermediate mathematical states. The statement
and diagram establish objects, relations, and quantities; subsequent steps introduce
auxiliary structure, derive exact consequences, and connect them to the goal. Neural
models are well suited to proposing structural moves from the full problem context,
while symbolic engines execute registered theorems and algebraic consequences under
explicit semantics. Their natural meeting point is the evolving formal state.

We call their repeated state-level interaction \emph{dense neural--symbolic
coupling}. Every admitted action is executed before the next decision, and both
reasoning sources condition on the updated state:
$s_t\!\rightarrow\!\mathcal C(s_t)\!\rightarrow\!a_t\!\rightarrow\!
s_{t+1}=T(s_t,a_t)$, where $\mathcal C(s_t)$ is the admitted candidate set. A neural
construction can therefore expose new symbolic consequences, while a symbolic
deduction can immediately reshape the next neural proposal. The complete reasoning
trace is built through this continuous feedback loop over the shared formal state.

This formulation leads to three coupled requirements: a \emph{shared representation}
covering heterogeneous geometry domains, a \emph{typed transition contract} that
records the logical status and provenance of every accepted action, and
\emph{adaptive allocation} over both proposal sources and admitted continuations.
\OmniGeo realizes these requirements with \GeoLib and \GeoDL, a common action interface,
and nested source-UCB and action-PUCT search. Prior geometry systems establish structured
text--diagram reasoning, learned theorem guidance, executable formal states, and
neural-guided search~\citep{seo2015geos,lu2021intergps,zhang2025fgeohypergnet,
xu2026hilbert,lample2022htps,xin2025deepseekprover,zhou2024lats}; \OmniGeo organizes
these capabilities around one continually updated typed geometry state.

\paragraph{Contributions.}
\begin{enumerate}
\item \textbf{Dense neural--symbolic coupling.} A shared typed state, common
execution interface, explicit provenance, and nested source/action allocation make
neural and symbolic progress immediately available to subsequent reasoning.
\item \textbf{Unified geometry reasoning with \OmniGeo.} One formal language and
solver spans plane, analytic (including conics), and solid geometry; benchmark and
ablation studies measure the representation, proposal, admission, and allocation
components across three main benchmarks and IMO-AG-30.
\end{enumerate}

\begin{figure}[t]
  \centering
  \includegraphics[width=\linewidth]{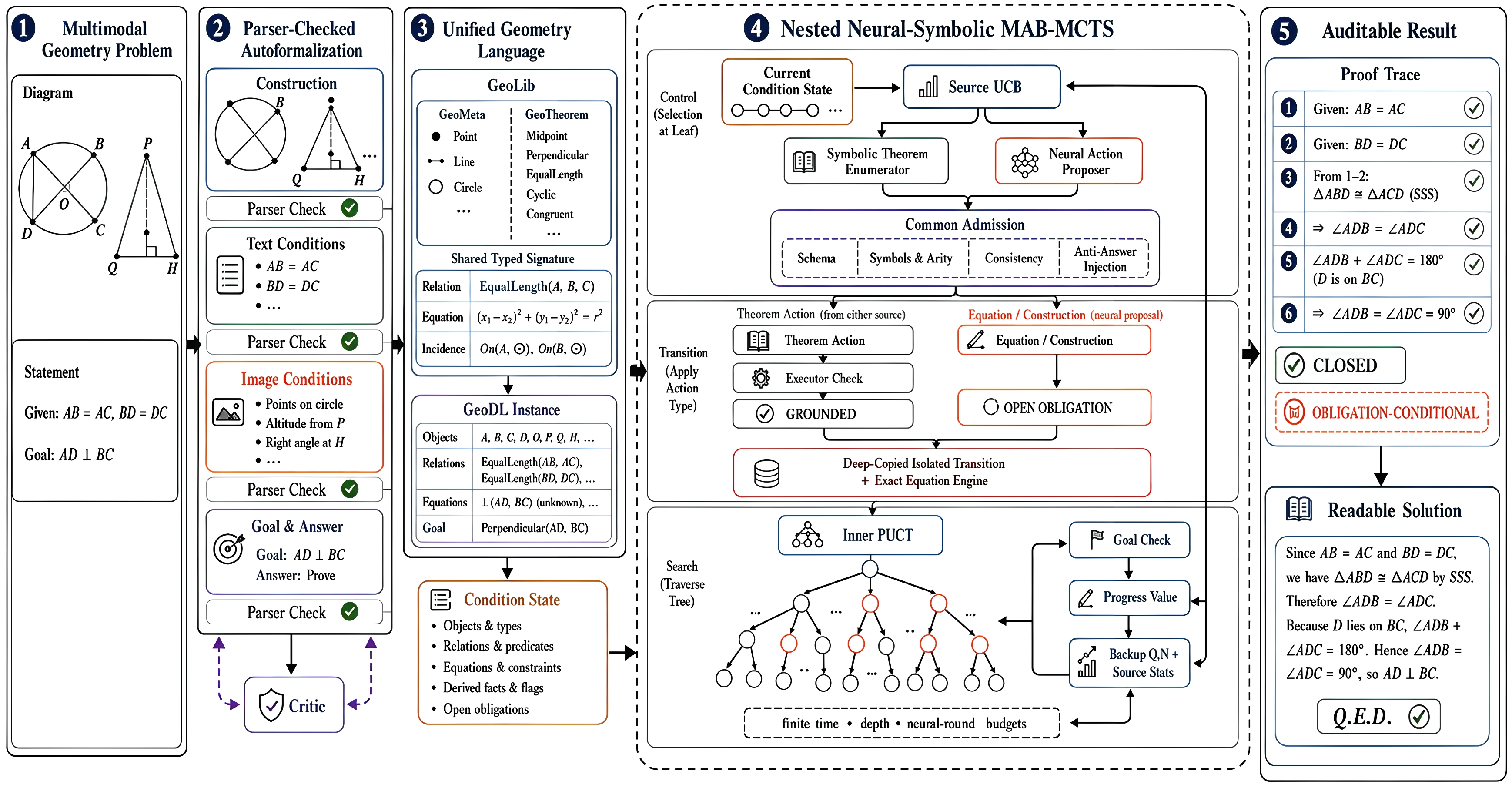}
  \caption{\OmniGeo overview. Staged formalization initializes a shared typed
  state, which neural guidance and symbolic rollout extend through adaptive
  search. Execution feedback informs subsequent steps, while provenance
  records their logical dependencies.}
  \label{fig:overview}
\end{figure}

\section{Unified geometry state and typed transitions}
\label{sec:state}

\paragraph{Shared formal state.}
\OmniGeo instantiates the coupling through a common representation of
geometric objects and relations. \GeoLib declares their types, predicates,
attributions, and executable theorem schemas. \GeoDL represents each problem
as $(\Gamma_0,G)$:
typed input conditions and a typed goal. All three geometry families use one
parser, condition store, theorem matcher, equation engine, and goal checker.
The compiler first extracts constructions, then conditions from the text and
diagram, and finally the goal. Parser feedback and critic-guided repair refine
this formalization (Fig.~\ref{fig:overview}), establishing the state on which
all subsequent reasoning operates.

\paragraph{Typed transition contract.}
To maintain a consistent interface throughout search, we define the state as
$s=(\Gamma_s,\mathcal E_s,G,\mathcal O_s,\Pi_s)$: conditions, exact equations,
the goal, open obligations, and a provenance ledger. Each candidate is a typed
action $a=(\tau,\eta,P,\sigma)$ containing its action class, payload, prior,
and source. Candidates from both sources undergo common admission checks for schema,
symbols and arity, duplicates, consistency, and direct answer injection.
Registered theorem actions are rematched and executed by the engine, with
their premise dependencies recorded. Neural constructions or asserted
equations can extend the state while retaining \emph{open} obligations until
grounded by input conditions or subsequent exact derivations. Each admitted
action is executed on an isolated copy of the parent state under the same
transition system:
\begin{equation}
 s' = T(s,a),\qquad
 \mathcal K(s)=\mathcal K_{\rm thm}(s)\cup\mathcal K_{\rm exact}(s)
 \cup\mathcal K_{\rm open,\theta}(s).
 \label{eq:kernel}
\end{equation}
The three action sets correspond to registered theorem application, exact
propagation, and open neural extensions. Their common execution contract lets
deductions and structural proposals compose while preserving their distinct logical status
(Appendix~\ref{app:language}).

\section{Adaptive dual-source search}
\label{sec:search}

Once both reasoning sources act on the same state, source choice becomes a search
decision. At each newly expanded leaf, the outer controller selects
$E\in\{\mathrm{sym},\mathrm{neu}\}$. The symbolic source enumerates registered
theorem instances grounded in $\Gamma_s$; the neural source uses a bounded
planner/extractor call to propose theorem instances, constructions, or equations.
Both sources pass through the same admission layer and materialize typed child states
(Fig.~\ref{fig:nested-search}). If $\mathcal C_E(s)$ is the post-filter child set,
source selection uses
\begin{equation}
 E^*=\arg\max_E\left[\bar R_E(s)+c_{\rm meta}
 \sqrt{\frac{\log(1+N_{\rm src}(s))}{1+n_E(s)}}\right],
 \quad R_E(s)=V(s)\ind[\mathcal C_E(s)\neq\varnothing].
 \label{eq:source}
\end{equation}
Here $n_E(s)$ counts source attempts and $\bar R_E(s)$ is the empirical source
return. A nonempty expansion receives the current progress value $V(s)$; an empty
primary expansion receives zero and gives the other available source an independent
fallback attempt. The controller therefore allocates proposal effort according to
where symbolic enumeration or neural structural proposals are productive along the
evolving path.

\begin{figure}[t]
  \centering
  \includegraphics[width=0.94\linewidth,trim=0 360 0 0,clip]{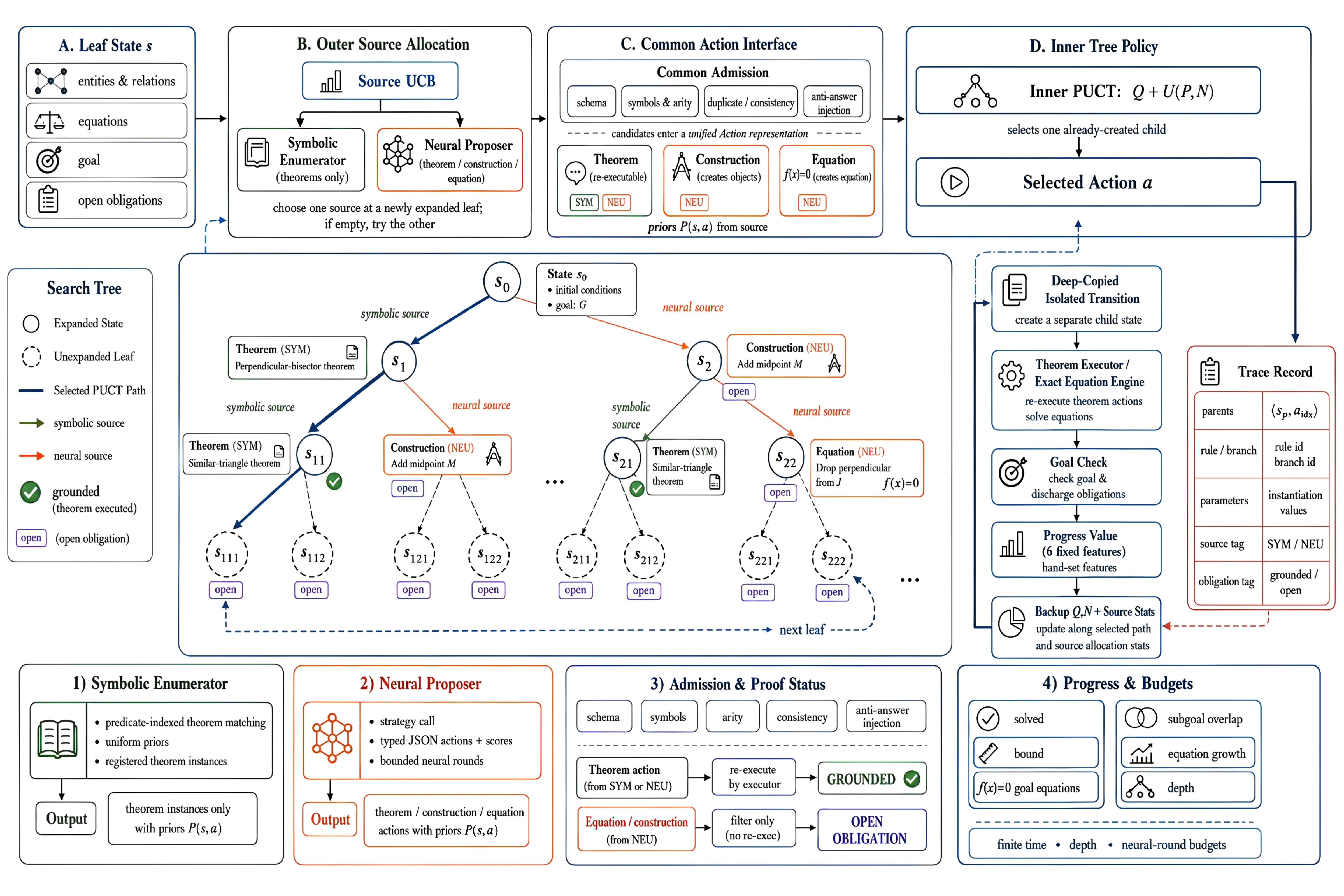}
  \caption{Nested source/action search. Source-UCB chooses symbolic enumeration or
  neural proposal at a leaf; both enter one typed admission interface. Inner PUCT
  traverses admitted children, while isolated execution, goal checking, progress
  evaluation, and provenance feed the next search decision.}
  \label{fig:nested-search}
\end{figure}

After expansion, the inner controller allocates revisits among admitted actions:
\begin{equation}
 a^*=\arg\max_a\left[Q(s,a)+c_{\rm puct}P(s,a)
 \frac{\sqrt{N(s)}}{1+N(s,a)}\right].
 \label{eq:puct}
\end{equation}
$P(s,a)$ is the post-filter prior, $Q(s,a)$ the mean backed-up value, and
$N(s),N(s,a)$ the visit counts. The bounded progress value $V(s)\in[0,1]$
combines solved status, concrete goal bindings, goal-related equations, backward
subgoal overlap, equation growth, and depth; the exact $\GoalCheck$ alone determines
closure. Reported runs use $c_{\rm meta}=c_{\rm puct}=1.5$, maximum depth 15,
three neural expansion rounds, and a nominal 600-second budget per problem.

\paragraph{Formal properties.}
The same transition protocol supports the logical and operational statements used in
search. Provenance-complete derivations are conditionally sound relative to the
compiled inputs and any open obligations in their dependency slice; node-local state
copies preserve branch isolation; finite budgets give termination; and a persistently
live positive-prior child is explored infinitely often whenever its parent is revisited
infinitely often. Appendix~\ref{app:proofs} gives the statements and proofs, including
type preservation, local source-regret, and conditional proof-path reachability.

\section{Experiments}
\label{sec:experiments}

\paragraph{Evaluation protocol.}
We evaluate end-to-end solving on FormalGeo7K (plane; $N=1{,}396$), Conic10K
(analytic/conic; $N=1{,}901$), and SolidFGeo (solid; $N=1{,}320$)
\citep{zhang2023formalgeo,wu2023conic10k,wang2025solidgeo}. FormalGeo7K and
SolidFGeo use official statement--diagram pairs; Conic10K uses its text input. A
prediction is correct only when the engine closes the goal and the external scorer
matches the derived answer to the held-out reference. General-purpose baselines use
the same available modality and answer normalization; DeepSeek-V4-Flash is explicitly
text-only. Within each \OmniGeo configuration, the compiler and in-search proposer
share one frozen backend, while the typed runtime, search policy, and evaluation
protocol remain fixed. Appendix~\ref{app:impl} gives complete prompts and budgets.

\begin{table}[t]
\centering
\caption{End-to-end accuracy (\%) on the three main benchmarks. \emph{Avg.}
is the unweighted macro average over FormalGeo7K, Conic10K, and SolidFGeo.}
\label{tab:main-results}
\small
\setlength{\tabcolsep}{4.1pt}
\begin{tabular}{lrrrr}
\toprule
Method & FormalGeo7K & Conic10K & SolidFGeo & Avg.\\
\midrule
\multicolumn{5}{l}{\emph{General-purpose models}}\\[-2pt]
Claude Sonnet 4.6 & 71.6 & 78.2 & 61.7 & 70.5\\
GPT-5.4 & 85.4 & 82.4 & 65.4 & 77.7\\
Gemini-3-Pro & 85.3 & 79.8 & 63.8 & 76.3\\
DeepSeek-V4-Flash (text only) & 61.2 & 84.6 & 63.0 & 69.6\\
\addlinespace[2pt]
\multicolumn{5}{l}{\emph{Geometry-specific systems}}\\[-2pt]
Hilbert-Geo~\citep{xu2026hilbert} & 84.2 & -- & 78.7 & --\\
\midrule
\textbf{\OmniGeo (Claude Sonnet 4.6)} & \textbf{94.2} & \textbf{88.5} & \textbf{89.8} & \textbf{90.8}\\
\OmniGeo (Qwen3-VL-8B-Thinking) & 92.5 & 85.1 & 82.7 & 86.8\\
\bottomrule
\end{tabular}
\end{table}

\paragraph{Main benchmark results.}
Table~\ref{tab:main-results} shows that \OmniGeo improves end-to-end accuracy across
all three domains. Relative to direct Claude Sonnet 4.6, the Claude-based system gains
$22.6$, $10.3$, and $28.1$ points on FormalGeo7K, Conic10K, and SolidFGeo, raising
the macro average from $70.5\%$ to $90.8\%$ ($+20.3$ points). Relative to the
strongest displayed general-purpose score in each column, the margins are $8.8$,
$3.9$, and $24.4$ points; relative to the strongest complete macro average (GPT-5.4,
$77.7\%$), the difference is $13.1$ points. The same runtime and search stack with
Qwen3-VL-8B-Thinking reaches $92.5\%$, $85.1\%$, and $82.7\%$ ($86.8\%$ macro),
showing that the cross-domain result persists under a second neural backend. Where
Hilbert-Geo reports comparable entries, the Claude-based system is higher by $10.0$
points on FormalGeo7K and $11.1$ points on SolidFGeo.

\begin{table}[t]
\centering
\caption{Component ablations with Claude Sonnet 4.6 as the neural backend.
$\Delta$ Avg. is the absolute percentage-point change from the full system.}
\label{tab:component-ablation}
\small
\setlength{\tabcolsep}{3.4pt}
\begin{tabular}{lrrrrr}
\toprule
Variant & FormalGeo7K & Conic10K & SolidFGeo & Avg. & $\Delta$ Avg.\\
\midrule
Full \OmniGeo & 94.2 & 88.5 & 89.8 & 90.8 & --\\
Fixed source schedule & 89.1 & 83.5 & 84.5 & 85.7 & $-5.1$\\
No neural bridge actions & 80.9 & 65.0 & 68.6 & 71.5 & $-19.3$\\
No admission filter & 87.8 & 79.5 & 82.6 & 83.3 & $-7.5$\\
One-pass formalization & 84.3 & 79.0 & 80.5 & 81.3 & $-9.6$\\
\bottomrule
\end{tabular}
\end{table}

\paragraph{Component analysis.}
Table~\ref{tab:component-ablation} changes one component at a time. A fixed source
schedule reduces the macro average by $5.1$ points; removing neural bridge actions by
$19.3$; removing admission filtering by $7.5$; and one-pass formalization by $9.6$.
Every ablation reduces accuracy on every benchmark. Neural bridge removal produces
the largest drop in all three domains ($13.3$, $23.5$, and $21.2$ points), quantifying
the value of construction and algebraic transitions beyond registered theorem
enumeration. The fixed-schedule drop remains tightly grouped at $5.0$--$5.3$ points,
while the state-side formalization and admission ablations produce consistent losses,
separating the contributions of state construction, transition quality, and adaptive
search control.

\begin{table}[t]
\centering
\caption{Geometry-specific specialization regimes on IMO-AG-30. Resource
quantities are reported in the native units of the corresponding systems.}
\label{tab:imo-ag30}
\small
\setlength{\tabcolsep}{4.0pt}
\begin{tabular}{lclc}
\toprule
Method & Parameter adaptation & Specialized resource & Solved\\
\midrule
AlphaGeometry~\citep{trinh2024alphageometry} & from-scratch LM & 100M theorem--proof examples (9M aux.) & 25/30\\
TongGeometry~\citep{zhang2026tong} & policy/value fine-tuning & 6.7B generated aux. problems & 30/30\\
\OmniGeo & None & $<200$ registered executable schemas & 21/30\\
\bottomrule
\end{tabular}
\end{table}

\paragraph{Olympiad geometry and specialization regime.}
On IMO-AG-30, the Claude-based \OmniGeo configuration solves $21/30$ problems
($70.0\%$) with a frozen neural backend and fewer than 200 registered executable
schemas (Table~\ref{tab:imo-ag30}). AlphaGeometry reports $25/30$ after training from
scratch on 100M theorem--proof examples (9M with auxiliary constructions), while
TongGeometry reports $30/30$ after large-scale generation of 6.7B auxiliary-construction
problems and policy/value fine-tuning~\citep{trinh2024alphageometry,zhang2026tong}.
The table keeps each system's specialization resource in its native unit and places
\OmniGeo's compact executable theory in the same empirical context.

\section{Analysis}
\label{sec:analysis}

\paragraph{State-level composition across geometry families.}
Equation~\ref{eq:kernel} gives a single transition view of three kinds of progress:
registered theorem execution, exact algebraic propagation, and neural bridge actions.
They operate over the same typed condition state, so a construction proposed from
problem context can immediately create premises for theorem matching, and a symbolic
consequence can immediately change the context presented to the neural proposer. The
main benchmark profile shows this composition under one parser, condition store,
equation engine, and search implementation across plane, conic, and solid geometry.
The ablations resolve the corresponding roles: staged formalization determines the
quality of the initial shared state, admission controls which proposed transitions
enter the tree, neural bridges enlarge the reachable action space, and source
allocation determines where expansion effort is spent.

\paragraph{Search allocation and sensitivity.}
The outer and inner controllers solve different allocation problems. Source-UCB
chooses how to generate the next candidate set at a newly expanded leaf; PUCT then
revisits actions that have already entered the tree. This separation is reflected in
the fixed-schedule ablation, which changes source allocation while leaving the action
interface and traversal rule intact. In the search-constant sweep (Appendix~\ref{app:sensitivity}),
varying $c_{\rm puct}$ or $c_{\rm meta}$ over $\{0.5,1.0,1.5,2.0,3.0\}$ with the
other held at $1.5$ places the shared setting $c_{\rm puct}=c_{\rm meta}=1.5$ at the
highest observed accuracy in all three domains; solid geometry is especially
responsive to the outer source-allocation constant. This pattern aligns with the
consistent gain from adaptive source selection in Table~\ref{tab:component-ablation}.

\paragraph{Executable semantics as the reusable substrate.}
The two neural backends change proposal quality and finite-budget solve rate while
leaving the geometry state, registered theorem semantics, exact equation execution,
and provenance contract unchanged. The Claude and Qwen3-VL-8B-Thinking results
therefore exercise the same symbolic substrate with different proposal models. The
IMO-AG-30 experiment extends the same design to olympiad construction problems using
a compact executable library. Taken together, the experiments position the typed
state and transition contract as the stable interface through which model priors,
formal deduction, and inference-time search can be composed.

\section{Conclusion}
\label{sec:conclusion}

\OmniGeo formulates geometry reasoning as adaptive search over a shared typed
state. Neural guidance supplies multimodal formalization, structural proposals,
and action priors; symbolic execution supplies registered theorem application,
exact propagation, and provenance-aware state updates. The source-UCB/action-PUCT
controller repeatedly reallocates computation as this state evolves. Across
FormalGeo7K, Conic10K, and SolidFGeo, the resulting system reaches a $90.8\%$
macro average with Claude Sonnet 4.6 and $86.8\%$ with Qwen3-VL-8B-Thinking.
Component ablations localize gains to staged formalization, neural bridge
actions, typed admission, and adaptive source allocation, while the IMO-AG-30
result demonstrates the same interface in an olympiad setting with a compact
executable geometry library. Together, these results establish a unified
state-level contract for composing learned proposals with exact symbolic
execution across heterogeneous geometry domains.

\clearpage
\appendix
\section{Detailed \GeoLib and \GeoDL Specification}
\label{app:language}

This appendix specifies the shared-state interface introduced in the main
text. We describe its declarations and representative rules, followed by
condition normalization and the provenance records linking successive
transitions.

\subsection{Predicate and Attribution Declarations}

Every symbol has a declaration
\begin{equation}
\begin{aligned}
\operatorname{PredicateDef}(p)=\{&\mathrm{name},\mathrm{args},
\mathrm{existence},\mathrm{validity},\\
&\mathrm{multi},\mathrm{expansion},\mathrm{symbol}\}.
\end{aligned}
\end{equation}
The fields specify the predicate name, typed arguments, existence and validity
checks, multiplicity, expansion rule, and associated algebraic symbol.
Relation predicates return truth values; attribution predicates create exact
algebraic symbols. Table~\ref{tab:predicate-hierarchy} lists representative
surface forms, organized into classes that share one formal signature.

\begin{table*}[ht]
\centering
\caption{Representative declarations in the unified signature.}
\label{tab:predicate-hierarchy}
\setlength{\tabcolsep}{5pt}
\begin{tabular}{p{0.16\linewidth}p{0.27\linewidth}p{0.49\linewidth}}
\toprule
Class & Role & Examples\\
\midrule
Construction & Conservative introduction of entities &
$\gpred{Shape}(ABC)$, $\gpred{Collinear}(ABC)$,
$\gpred{Cocircular}(O,AB)$, $\gpred{Coplanar}(U,ABC)$,
$\gpred{Cospherical}(O,ABC)$\\
Entity & Typed geometric objects &
$\gpred{Triangle}(ABC)$, $\gpred{Circle}(O)$, $\gpred{Plane}(U)$,
$\gpred{Sphere}(O)$, $\gpred{Cube}(ABCDEFGH)$,
$\gpred{Cylinder}(P,Q)$, $\gpred{Conic}(\Gamma)$\\
Relation & Qualitative facts &
$\gpred{ParallelBetweenLine}(AB,CD)$,
\par\resizebox{0.82\linewidth}{!}{$\gpred{PerpendicularBetweenLineAndPlane}(AB,U)$},
$\gpred{OnCurve}(A,\Gamma)$, $\gpred{Tangent}(l,\Gamma)$\\
Attribution & Exact geometry--algebra bridge &
$\gpred{LengthOfLine}(AB)$, $\gpred{MeasureOfAngle}(ABC)$,
$\gpred{Coordinate}(A,x_A,y_A)$, $\gpred{RadiusOfSphere}(O)$,
$\gpred{VolumeOfCylinder}(P,Q)$\\
\bottomrule
\end{tabular}
\end{table*}

For example, a neural midpoint construction takes arguments $(M,A,B)$.
Admission checks their arity, creates missing point symbols and the
corresponding line and angle extensions, and adds
\begin{equation*}
\gpred{Collinear}(AMB),\qquad
\gpred{IsMidpointOfLine}(M,AB).
\end{equation*}
These relations are recorded as bridge conditions under the
\texttt{llm\_construction} tag. They can be grounded by input conditions or
subsequent registered theorem applications. The declarative midpoint
expansion is
\begin{equation*}
\begin{aligned}
\gpred{Equal}\bigl(&\gpred{LengthOfLine}(AM),\\[-1mm]
&\gpred{LengthOfLine}(MB)\bigr).
\end{aligned}
\end{equation*}
The registered midpoint theorem introduces this metric equality during exact
execution.

\subsection{Representative Rule Schemas}

A representative plane rule is
\begin{align}
&\gpred{Equal}\bigl(\gpred{LengthOfLine}(AM),\nonumber\\[-1mm]
&\hspace{15mm}\gpred{LengthOfLine}(MB)\bigr)
\wedge\gpred{Collinear}(AMB)\nonumber\\
&\hspace{20mm}\Rightarrow\gpred{IsMidpointOfLine}(M,AB).
\end{align}
A registered solid-incidence rule is
\begin{align}
&\gpred{PerpendicularBetweenLine}(AD,BD)\wedge\nonumber\\[-1mm]
&\gpred{PerpendicularBetweenLine}(AD,CD)\wedge\nonumber\\[-1mm]
&\gpred{Coplanar}(U,BCD)
\wedge\neg\gpred{Coplanar}(U,AD)\nonumber\\
&\Rightarrow\nonumber\\[-1mm]
&\scalebox{0.82}{$\gpred{PerpendicularBetweenLineAndPlane}(AD,U)$}.
\end{align}
The last premise is the incidence guard that excludes $AD$ from plane $U$.
A coordinate rule may introduce
\begin{equation}
\gpred{Distance}(A,B)^2=(x_A-x_B)^2+(y_A-y_B)^2
\end{equation}
from typed coordinate declarations.  A conic tangency rule can reduce a
line--curve intersection to a discriminant constraint, after which the exact
equation engine propagates the recorded relation.  A metric solid rule derives
\begin{align}
&\gpred{Cylinder}(P,Q)\Rightarrow
\gpred{Equal}\bigl(\gpred{VolumeOfCylinder}(P,Q),\nonumber\\[-1mm]
&\hspace{12mm}\gpred{Mul}(\gpred{AreaOfCircle}(P),\nonumber\\[-1mm]
&\hspace{24mm}\gpred{HeightOfCylinder}(P,Q))\bigr)
\end{align}
under the corresponding entity and height declarations.

\subsection{Condition Normalization and Duplicate Control}

Predicate declarations define permitted symmetric or multi-argument
extensions, and the condition pool groups items by predicate.  Exact
expressions are normalized by the symbolic algebra backend, condition
insertion suppresses duplicates, and each action has a stable key derived
from its kind and payload. These checks reduce duplicate children locally.
Each MCTS node retains a separate copy of its condition object; nodes with
equivalent fact sets remain distinct because the search does not perform
global state hashing or node merging.

The declaration-level canonical forms include
\begin{equation}
\begin{gathered}
\gpred{Line}(AB)\equiv\gpred{Line}(BA),\\
\gpred{Equal}(x,y)\equiv\gpred{Equal}(y,x),\\
\gpred{ParallelBetweenPlane}(U,V)\equiv
\gpred{ParallelBetweenPlane}(V,U).
\end{gathered}
\end{equation}

\subsection{Provenance and Assumption Ledger}

The autoformalization record preserves separate construction, text, and image
\GeoDL arrays.  At runtime, root conditions use premise marker $-1$; theorem
conditions store premise IDs and a $(\mathsf{name},\mathsf{branch},
\mathsf{parameters})$ tuple; neural equations and constructions store
\texttt{llm} and \texttt{llm\_construction}; and internal extensions or
equation solves use reserved bookkeeping names.  The saved \GeoDL arrays retain
the input source channel, while runtime records retain transition provenance.
Together they support the goal-dependency slice $U_g$ used in
Theorem~\ref{thm:soundness}.

\section{Proofs of the Formal Guarantees}
\label{app:proofs}

The guarantees distinguish the logical status of a derivation from the
search process used to discover it. The statements below retain any open
neural dependencies explicitly and specify the conditions under which a
returned derivation is grounded.

\subsection{Formal Statements}
\label{sec:guarantees}

We formalize the logical and operational guarantees of the typed transition
and search protocol. Registered execution and exact propagation support
conditional soundness; state isolation and finite budgets constrain
execution; and source allocation and PUCT govern exploration. Let
$\mathfrak M$ denote the intended semantics of the compiled signature.
The following statements specify these guarantees, with expanded proofs and
source-accounting details in the subsequent subsections.

\begin{assumption}[Certified transition contract]
\label{ass:transition}
For every enabled registered theorem branch $r$, successful premise matching
and conclusion execution are sound in $\mathfrak M$.

Every deterministic
extension admitted to the certified transition registry preserves the meaning
of its source facts.

Exact propagation emits a value only when it is entailed
by the recorded equations together with their typed domain and sign
constraints; in particular, selecting one merely consistent root of a
non-unique system is not a certified propagation step.
\end{assumption}

\begin{definition}[Provenance-complete goal record]
A returned goal record is \emph{provenance-complete} when the ledger contains
its premise identifiers and every transition in their transitive ancestry.
Let $U$ be the set of recorded open neural facts and $U_g\subseteq U$ those
in this dependency slice.
\end{definition}

\paragraph{Returned record classes.}
The system classifies returned records as
\texttt{grounded}, \texttt{open\_obligation}, or \texttt{procedural}.  The
following result applies to provenance-complete records in the first two
classes; procedural outputs are reported separately and are not treated as
proof certificates.

\begin{theorem}[Soundness of returned derivations]
\label{thm:soundness}
If the \OmniGeo procedure returns a provenance-complete goal record $g$ and
Assumption~\ref{ass:transition} holds for every transition in its dependency slice, then
\begin{equation}
\mathfrak M\models
\left(\bigwedge_{f\in\Gamma_0}f\wedge\bigwedge_{u\in U_g}u\right)
\Rightarrow g.
\label{eq:obligation-soundness}
\end{equation}
When $U_g=\varnothing$, the returned goal follows from the compiled input
assumptions $\Gamma_0$ through certified registered, extension, and exact
transitions.

This is soundness relative to the compiled instance; fidelity of
$\Gamma_0$ to the raw text and diagram is a separate compilation property.
\end{theorem}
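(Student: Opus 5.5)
The plan is to argue by well-founded induction on the dependency slice of $g$. Provenance-completeness says the ledger contains the premise identifiers of $g$ and every transition in their transitive ancestry. Let $D_g$ be this finite set of recorded facts and equations, with the parent relation given by the stored premise IDs. Each node of $D_g$ is recorded only after its premises exist in the condition store of the isolated parent copy. Premise IDs therefore always point to strictly earlier insertions, so the parent relation is acyclic and $D_g$ carries a well-founded order by insertion index. Write $H=\bigwedge_{f\in\Gamma_0}f\wedge\bigwedge_{u\in U_g}u$. I will prove the invariant: for every $h\in D_g\cup\{g\}$, $\mathfrak M\models H\Rightarrow h$. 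The theorem is the case $h=g$.

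The inductive step is a case split on the provenance tag of $h$, following the ledger conventions of Appendix~\ref{app:language}.
\begin{enumerate}
\item \emph{Root facts.} If $h$ has premise marker $-1$, then $h\in\Gamma_0$ and the claim is immediate.
\item \emph{Open neural facts.} If $h$ carries an \texttt{llm} or \texttt{llm\_construction} tag and has not been re-derived, then $h\in U$. Because $h$ lies in the slice, $h\in U_g$, so it is a conjunct of $H$.
\item \emph{Registered theorems.} If $h$ was produced by a registered theorem with tuple $(\mathsf{name},\mathsf{branch},\mathsf{parameters})$ and premises $p_1,\dots,p_k$, the induction hypothesis gives $\mathfrak M\models H\Rightarrow p_i$ for each $i$. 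The first clause of Assumption~\ref{ass:transition} says matching-plus-execution is sound, i.e.\ $\mathfrak M\models\bigwedge_i p_i\Rightarrow h$. Chaining the implications gives $\mathfrak M\models H\Rightarrow h$.
\item \emph{Deterministic extensions.} For internal extensions under reserved bookkeeping names, the second clause of the assumption says meaning is preserved from the source facts. The same chaining argument applies.
\item \emph{Exact propagation.} For equation solves, the third clause says the emitted value is entailed by the recorded equations together with their typed domain and sign constraints. Those equations and constraints are themselves ledger entries in $D_g$, so the induction hypothesis applies to them and the argument closes as before.
\end{enumerate}

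The final link is $\GoalCheck$. I treat it as one more certified step, since it checks that $g$ is literally present in the store or exactly equal to a derived expression; its premises lie in $D_g$. The corollary for $U_g=\varnothing$ follows by specializing $H$. A non-empty $U_g$ is exactly the \texttt{open\_obligation} class, so the statement covers both returned classes. \texttt{procedural} records are excluded by hypothesis.

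I expect the main obstacle to be showing that every fact the inductive step uses really lies in $D_g$ and really carries one of the certified tags. Two failure modes need to be ruled out. First, a later grounding of an open fact might overwrite its provenance. Here I will use that duplicate suppression keeps the \emph{first} recorded justification. When a fact is grounded later, the new entry records the theorem's premises; either way the tag attached to the ledger entry reflects how the fact was actually justified. Second, state copying across branches might carry in facts from other nodes. Branch isolation handles this: each node holds a separate copy of its conditions, so the ancestry of $g$ is contained in its own root-to-node path.

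A related subtlety is exact propagation. Only entailed values are emitted, but the induction must also carry the typed domain and sign constraints used by the solver. I will handle this by treating those constraints as facts in the slice, derived from declarations in $\Gamma_0$ or from earlier certified steps.

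Finally, the remark on compilation fidelity needs no proof. All statements are relative to $\mathfrak M$ and the compiled $\Gamma_0$.
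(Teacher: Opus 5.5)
Your proposal is correct and follows essentially the paper's route: an induction over the finite acyclic premise ancestry of $g$, where leaves lie in $\Gamma_0$ or $U_g$ and each theorem, extension, and exact-propagation node is discharged by the corresponding clause of Assumption~\ref{ass:transition}. Your extra care goes beyond what the paper's sketch states, namely acyclicity via insertion order, branch isolation, and handling $\GoalCheck$ and the solver's domain/sign constraints explicitly; for the constraints, note that those coming from typed declarations already hold in $\mathfrak M$ and need not be slice facts. None of this changes the argument.
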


\begin{proof}[Proof sketch]
Induct over the premise records reachable from $g$.  Leaves are compiled
assumptions or explicitly tagged members of $U_g$.  Registered theorem steps
are sound by Assumption~\ref{ass:transition}, and exact propagation is sound relative to its
recorded equation parents.  Canonical extensions preserve equivalence by
the canonicalization contract. Composition yields
Eq.~\eqref{eq:obligation-soundness}, using exactly the compiled assumptions
and bridge conditions in the goal ancestry.
\end{proof}

\begin{corollary}[Separation of proposal and verification]
For any deterministic or stochastic proposal process, every
provenance-complete returned trace with $U_g=\varnothing$ remains valid
relative to $\Gamma_0$.  Proposal quality affects which valid traces are
reached within budget, but not the validity of a trace accepted by the
certified transition contract.
\end{corollary}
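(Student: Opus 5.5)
The corollary follows from Theorem~\ref{thm:soundness} once we observe that nothing in that theorem's hypotheses refers to how actions were proposed. I would first fix an arbitrary proposal process, deterministic or stochastic, and condition on a single realized run. All randomness then sits in which candidates were generated and which children the source-UCB and PUCT controllers visited. The returned trace $g$ is a fixed object: a goal record together with its ledger ancestry. Validity is a property of this object alone, namely Eq.~\eqref{eq:obligation-soundness} with $U_g=\varnothing$. It therefore suffices to check, pointwise for each realized trace, that Theorem~\ref{thm:soundness} applies.

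The key step is to verify that every transition in the dependency slice of $g$ lies in the certified transition contract of Assumption~\ref{ass:transition}, independently of its source. I would split on the provenance tags recorded in the ledger.

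\emph{Root conditions} carry premise marker $-1$; they are members of $\Gamma_0$.

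\emph{Registered theorem conditions} carry a $(\mathsf{name},\mathsf{branch},\mathsf{parameters})$ tuple. This holds even when a neural model proposed the theorem instance, because the engine rematches and executes it, so Assumption~\ref{ass:transition} applies to the execution and not to the proposal.

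\emph{Exact propagation and canonical extensions} are emitted by the engine under the same contract.

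\emph{Neural equations and constructions} carry the tags \texttt{llm} or \texttt{llm\_construction}. By definition these are exactly the members of $U$, and any that lie in the ancestry of $g$ belong to $U_g$.

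Since $U_g=\varnothing$, no node in the slice carries a neural tag, and Theorem~\ref{thm:soundness} gives $\mathfrak M\models\bigwedge_{f\in\Gamma_0}f\Rightarrow g$.

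The second sentence of the corollary is then a restatement. The proposal process influences only the distribution over realized runs, and hence over which traces are reached before the budget expires. The validity predicate holds on every trace in the support with $U_g=\varnothing$. So changing the proposer, or its prior $P$, reweights or truncates the set of reachable valid traces but cannot make any accepted one invalid.

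The main obstacle I expect is justifying that the proposer cannot influence the logical content of an execution except through open tagged facts. Three leaks need closing. First, a neurally proposed theorem instance must be rematched by the engine rather than trusted. Second, direct answer injection must be blocked by admission. Third, branch isolation must hold, so that facts from a sibling branch, possibly neurally derived, cannot enter this node's ancestry untagged. I would handle these by appealing to the admission description and the node-local copies from Section~\ref{sec:state}. I would also state explicitly that the corollary is conditional on the same contract assumptions as Theorem~\ref{thm:soundness}.
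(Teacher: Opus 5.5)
Your proposal is correct and matches the paper. The paper gives no separate argument: the corollary is Theorem~\ref{thm:soundness} specialized to $U_g=\varnothing$, and neither that theorem's hypotheses (provenance-completeness and Assumption~\ref{ass:transition} on the dependency slice) nor its conclusion mention how candidates were proposed, so it applies pointwise to every realized run exactly as you argue. Two minor tightenings: $U$ consists of \emph{open} neural facts, so a neural fact later grounded by a certified derivation may still appear in the slice, where it is covered by that derivation rather than excluded by $U_g=\varnothing$; and the answer-injection filter is not needed for validity, since an injected equation would be an open tagged leaf and is already ruled out by $U_g=\varnothing$.
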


\begin{proposition}[Parent and sibling isolation]
\label{prop:isolation}
Let $s'$ be a child materialized from $s$ by action $a$.  Applying $a$ cannot
change $\Gamma_s$, its goal flags, or the semantic state of any sibling of $s'$.  A
failed or subsequently low-valued neural assertion is therefore confined to
the subtree rooted at $s'$.
\end{proposition}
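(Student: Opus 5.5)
The argument rests on the operational semantics of the transition system in Eq.~\eqref{eq:kernel}, read together with the statement in Section~\ref{sec:state} that each admitted action is executed on an isolated copy of the parent state. First I will formalize materialization as a two-stage map. Given a node $s=(\Gamma_s,\mathcal E_s,G,\mathcal O_s,\Pi_s)$ and an admitted action $a$, the runtime first forms a copy $\hat s=\mathrm{copy}(s)$ whose condition pool, equation store, goal flags, obligation set and ledger are fresh objects, sharing no mutable component with $s$. It then applies $T(\cdot,a)$ to $\hat s$ only. The invariant I want is this: every write performed by $T(\cdot,a)$ is a write into a component reachable from $\hat s$. That covers admission-time symbol creation, theorem conclusion insertion, exact equation solves, the \texttt{llm}/\texttt{llm\_construction} tags, and goal-flag updates from $\GoalCheck$. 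I will establish the invariant by case analysis over the three action classes $\mathcal K_{\rm thm}$, $\mathcal K_{\rm exact}$ and $\mathcal K_{{\rm open},\theta}$, checking that each primitive (rematch, execute, insert with duplicate suppression, normalize, record provenance) takes the target state as its only mutable argument and reads $s$ at most immutably.

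With the invariant in hand, parent preservation is immediate. $\Gamma_s$ and the goal flags of $s$ live in objects disjoint from those of $\hat s$, so no write of $T$ touches them. Hence after materialization they are equal to what they were before.

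Sibling isolation follows by the same reasoning. Each sibling $s''=T(\mathrm{copy}(s),a'')$ was built from its own copy, and Appendix~\ref{app:language} records that every MCTS node keeps a separate condition object and that there is no global state hashing or node merging. So the mutable parts of $s''$ and $s'$ are disjoint, and applying $a$ writes only into $s'$. Search bookkeeping such as $N(s,a)$, $Q(s,a)$, $n_E(s)$ and $\bar R_E(s)$ does change at the parent. I will argue this is not part of the \emph{semantic} state, meaning the tuple $(\Gamma,\mathcal E,G,\mathcal O,\Pi)$, so it falls outside the claim.

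For the confinement statement I will argue by induction on depth. Any node is obtained from the root by a chain of copy-then-apply steps. A fact $u$ introduced by $a$ therefore appears in the semantic state of a node $t$ only if $t$ descends from $s'$, since states outside that subtree were copied from ancestors that never contained $u$. A low value of $s'$ affects other branches only through backed-up statistics in Eq.~\eqref{eq:puct} and Eq.~\eqref{eq:source}, not through their states.

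The main obstacle is the copy step itself. Shallow copying could leave shared references, for example an equation-engine cache, a symbol table, or a shared predicate-declaration object that admission extends when it creates missing point symbols. My first move will be to separate immutable global declarations (\GeoLib schemas) from per-node mutable stores, and to state explicitly as an implementation assumption that the per-node copy is deep for all mutable components. Any global caches must be functionally pure memoization, so that they cannot alter semantic outcomes.
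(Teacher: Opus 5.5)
Your proposal is correct and takes essentially the paper's route: deep-copy the parent's mutable condition object, apply the action only to that copy, build each sibling from a fresh copy of the parent, and conclude by induction over child creation. One mechanical difference is that the paper's implementation applies the action to a single \emph{installed} condition slot and then reinstalls the parent's saved condition pointer and goal tuple, so the parent's goal flags are protected by save-and-restore rather than by being copied, and this restore step is also what makes the next sibling copy $C_s$ rather than $C'$; your case analysis should therefore allow for this step rather than expect every write of $T$ to land in $\hat s$.
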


\begin{proof}[Proof sketch]
The implementation first deep-copies the parent's condition object, installs
the copy, applies the action, snapshots the child, and reinstalls the saved
parent condition and goal tuple.  No MCTS node shares a mutable condition
object with its siblings; diagnostic timing data and equation caches are not
members of the semantic state and cannot add conditions without the installed
node record.
\end{proof}

\begin{proposition}[Finite-budget termination and checking cost]
\label{prop:termination}
Let $I$ be the number of completed search-loop iterations and $B\le I$ the
number of source-expansion events before the reported wall-clock deadline,
$D$ the depth bound, and $K_{\rm sym},K_{\rm nn}$ the largest raw candidate
lists returned by the two sources in one attempt.

Because an expansion has
at most one primary and one fallback source, the run performs at most $ID$
tree-edge selections and
$B(K_{\rm sym}+K_{\rm nn})+K_{\rm rescue}$ admission attempts, where
$K_{\rm rescue}$ is the optional one-time longer symbolic root enumeration.
It uses at most $L$ neural rounds and terminates under the worker hard deadline.

For a fixed theorem substitution with $m$ premises, record lookup and
conclusion admission cost
$O(mC_{\rm lookup}+C_{\rm guard}+C_{\rm exec})$. The cost of finding
substitutions through predicate-indexed matching is separate and is limited
operationally by a per-call timer.
\end{proposition}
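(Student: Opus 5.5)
The plan is a counting argument over the search loop. The three claims, on edge selections, admission attempts and neural rounds, come from bounding each loop iteration's contribution and summing. Termination then follows from the hard deadline, independently of the counts.

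\textbf{Edge selections.} Fix one completed iteration. Selection descends from the root by repeated applications of the inner rule Eq.~\eqref{eq:puct}. Each application moves one level deeper, and no node is materialized below depth $D$. So one iteration performs at most $D$ tree-edge selections, and summing over the $I$ completed iterations gives at most $ID$.

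\textbf{Admission attempts.} I charge admissions to source-expansion events. By construction there are $B\le I$ such events, since each iteration expands at most one leaf. At an event, the outer rule Eq.~\eqref{eq:source} picks a primary source $E^*$, and the other source is queried only as a fallback when $\mathcal C_{E^*}(s)=\varnothing$. So each source is invoked at most once per event. Each invocation returns a raw list of at most $K_{\rm sym}$ or $K_{\rm nn}$ candidates, and every raw candidate triggers exactly one admission attempt. This gives at most $K_{\rm sym}+K_{\rm nn}$ attempts per event. The only admissions outside expansion events come from the one-time rescue enumeration at the root, which contributes $K_{\rm rescue}$. Summing gives $B(K_{\rm sym}+K_{\rm nn})+K_{\rm rescue}$.

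\textbf{Neural rounds and termination.} The bound of $L$ neural rounds is read off the controller's round counter, reported as three in the main text. Termination does not depend on the counts at all. The worker is killed at the hard deadline, so the run terminates even if some single call (matching, or an LLM request) fails to return. This also explains why the counts are stated only for completed iterations.

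\textbf{Per-substitution cost.} Fix a substitution with $m$ premises. Checking each premise is one lookup in the predicate-grouped condition pool, costing $C_{\rm lookup}$ each and $mC_{\rm lookup}$ in total. After that come one guard evaluation ($C_{\rm guard}$) and one conclusion execution plus insertion ($C_{\rm exec}$). Duplicate suppression is charged to $C_{\rm exec}$. Enumerating substitutions is explicitly excluded from this bound and is controlled only by the per-call timer.

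The main obstacle is the admission count. It needs an exact accounting of which code paths call admission. I must verify three things: fallback fires at most once per event, no re-admission happens on revisits (PUCT only traverses children that already exist), and the rescue enumeration is invoked at most once. Any such path that is missed would break the additive form of the bound.
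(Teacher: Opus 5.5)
Your proposal is correct and follows essentially the same accounting as the paper, which rests on the same mechanisms: the depth cap $D$, one primary plus at most one fallback source per expansion, a single root rescue, the round-counter guard, and an outer hard deadline enforced independently of the search loop. The one step you should make explicit, as the paper does, is that the counter is incremented after every planner/extractor attempt whether or not parsing succeeds, and that transport retries do not count as rounds; this is what makes $L$ a bound on all neural attempts rather than only on successful ones.
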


\paragraph{Budget implementation.}
The runner uses a nominal 600-second search cap, a 30-second worker-watchdog
grace interval, and a parent-process kill deadline.  Symbolic matching and
exact algebra consume the active allowance; on timeout, the runner returns the
last completed status.

\begin{proposition}[Bounded neural use and fallback]
At most $L$ neural proposal rounds are attempted.  A failed semantic round
consumes one round, so repeated failures cannot create an infinite retry path.

Whenever a chosen source returns no admissible child and the other source is
available, the implementation records zero for the primary and tries the
other source.  The fallback receives $V(s)$ if it produces an admissible child;
if it is also empty, it receives zero.

After $L$ neural rounds, all later expansions
use the symbolic source.
\end{proposition}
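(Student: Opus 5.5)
The statement has three clauses, and I would prove each by inspecting the control flow of the expansion routine. The argument tracks one counter, the number $\ell$ of neural rounds consumed, and one invariant, the source-selection rule applied at a leaf. No probabilistic or asymptotic reasoning is needed. Each claim is a safety property of the loop that decides which source to call and how to record the return $R_E(s)$ in Eq.~\eqref{eq:source}.

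\emph{Bounded neural use.} I would show that $\ell$ starts at $0$ and is incremented exactly once every time the neural source is invoked. This includes rounds whose output is unparseable, fails admission, or is otherwise semantically empty. The increment must happen before the result is inspected, so a failed round cannot bypass it. Next I would show that the guard ``neural source available'' is equivalent to $\ell<L$. Together these give the invariant $\ell\le L$ at every point of the run. Any retry path goes through the same guarded call, so it consumes rounds, and there can be at most $L$ neural attempts in total. Hence no infinite neural retry chain exists. Combined with Proposition~\ref{prop:termination}, this also caps the neural contribution to the admission attempts.

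\emph{Fallback rule.} I would case-split on the outcome at a leaf $s$ where the outer controller selects $E^*$.
\begin{itemize}
\item[(i)] If $\mathcal C_{E^*}(s)\neq\varnothing$, the recorded return is $V(s)$, as the definition of $R_E$ requires.
\item[(ii)] If $\mathcal C_{E^*}(s)=\varnothing$, the routine records $0$ for $E^*$, increments $n_{E^*}(s)$, and checks whether the other source $\bar E$ is available. For $\bar E=\mathrm{neu}$ that means $\ell<L$; the symbolic source is always available. If $\bar E$ is available, the routine performs exactly one attempt with it and records $V(s)\ind[\mathcal C_{\bar E}(s)\neq\varnothing]$.
\end{itemize}
This matches the stated values, including $0$ when both sources are empty. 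It also shows that each expansion uses at most one primary and one fallback attempt, which is the count Proposition~\ref{prop:termination} relies on.

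\emph{Symbolic-only tail.} Once $\ell=L$, the neural guard is false. I would argue that the outer $\arg\max$ in Eq.~\eqref{eq:source} is taken only over available sources, so the UCB score of $\mathrm{neu}$ is never consulted and $E^*=\mathrm{sym}$. By the same guard, the fallback branch in case (ii) cannot choose $\mathrm{neu}$.

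The main obstacle is this last step. The $\arg\max$ as written ranges over $\{\mathrm{sym},\mathrm{neu}\}$. With $n_{\mathrm{neu}}(s)$ small, the exploration bonus of $\mathrm{neu}$ could dominate, so a naive reading would select an exhausted source. I would handle this by stating explicitly that the outer maximization is restricted to the available set $\mathcal A(s)=\{\mathrm{sym}\}\cup\{\mathrm{neu}:\ell<L\}$, reading that restriction off the implementation, and checking that no other call site, such as the root rescue enumeration counted by $K_{\rm rescue}$, invokes the neural source outside the guard.
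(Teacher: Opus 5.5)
Your proposal is correct and follows essentially the paper's argument: the round counter is incremented after every planner/extractor pair whether or not parsing succeeds, so the guard $\textit{rounds}<L$ can hold at most $L$ times, and the per-attempt source accounting records an empty primary and its fallback as two separate observations $(E_0,0)$ and $(E_1,V(s))$. Your explicit restriction of the outer $\arg\max$ to available sources is what the paper leaves implicit in that guard; the only nuance is that transport-level retries are logged separately and consume no round, so your claim that every retry passes through the guarded call should be limited to semantic rounds, which is all the proposition asserts.
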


\begin{proposition}[Source-credit boundedness]
\label{prop:source-credit}
For an attempted source $E$, $R_E(s)$ in
Eq.~\eqref{eq:source} depends only on $V(s)$ and whether $E$ produces a
nonempty post-filter expansion.  It lies in $[0,1]$, is zero when no child is
materialized, and is independent of child values, child priors, and candidates
proposed only by another source.
\end{proposition}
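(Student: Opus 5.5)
The claim follows by unfolding the definition of the source reward in Eq.~\eqref{eq:source}, namely $R_E(s)=V(s)\,\ind[\mathcal C_E(s)\neq\varnothing]$, and checking each clause of Proposition~\ref{prop:source-credit} in turn. I treat the indicator and the progress value as the only two inputs to $R_E(s)$.

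\emph{Dependence.} $R_E(s)$ is the product of two factors. The first, $V(s)$, is evaluated at the expanded node $s$ itself. The second, $\ind[\mathcal C_E(s)\neq\varnothing]$, depends only on whether the post-filter child set of source $E$ is empty. So $R_E(s)$ is a function of the pair $(V(s),\ind[\mathcal C_E(s)\neq\varnothing])$ alone.

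\emph{Range.} The progress value $V(s)\in[0,1]$ is bounded by construction (Section~\ref{sec:search}), and the indicator lies in $\{0,1\}$. Hence the product lies in $[0,1]$.

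\emph{Vanishing.} If no child is materialized, then $\mathcal C_E(s)=\varnothing$, the indicator is zero, and $R_E(s)=0$. The same holds on the fallback path. By the bounded neural use and fallback proposition, an empty primary source is recorded as zero, and an empty fallback also receives zero. A nonempty fallback receives exactly $V(s)$, which is again the formula with indicator one.

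\emph{Independence.} I would argue that nothing in the formula reads a child's value, a child's prior $P(s,a)$, or the candidate list of the other source $E'\neq E$. The only quantity tied to other candidates is $\mathcal C_E(s)$, and that is the output of admission applied to $E$'s own raw list. Admission checks schema, arity, duplicates, consistency and answer injection against the parent state $s$, not against $E'$'s proposals.

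The main obstacle is the duplicate check in admission. If $E'$ has already expanded $s$ and its children are present, a candidate from $E$ could be rejected as a duplicate. That would make the emptiness of $\mathcal C_E(s)$ depend on $E'$'s candidates. I would close this gap by showing that duplicates are checked against the stable action keys of the parent's condition store and the source's own list. The admitted children of $E'$ are isolated copies (Proposition~\ref{prop:isolation}) and do not modify $\Gamma_s$. So the admission outcome for $E$ at $s$ is a function of $s$ and $E$'s raw candidates only.

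If the implementation did deduplicate across sources, I would weaken the claim accordingly. It would then state independence from candidates proposed only by the other source that are not key-equivalent to $E$'s own.
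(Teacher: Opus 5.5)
Your proof is correct and matches the paper's own justification, which likewise unfolds $R_E(s)=V(s)\,\ind[\mathcal C_E(s)\neq\varnothing]$ together with the source-accounting rule that an empty primary is logged as $(E_0,0)$ and a successful fallback as $(E_1,V(s))$. You can close your duplicate-check worry using only facts the paper states, without assuming how action keys are compared: a source is attempted only when a fresh leaf is expanded, either as primary (the other source has no children at $s$ yet) or as fallback after an empty primary, so $s$ never holds an admitted child of the other source against which $E$'s candidates could be deduplicated.
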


\begin{proposition}[Heuristic subordination]
For a fixed visited state, action filtering, theorem execution, and
$\GoalCheck$ are independent of the bounded value function used for search
ordering. Replacing the bounded progress function $V$ by a bounded $V'$ may
change source allocation and the visitation policy while preserving terminal
verification and the transition ledger.
\end{proposition}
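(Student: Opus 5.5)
The plan is to make the informal claim precise as a dependency statement: for a fixed visited state $s$, the objects \emph{admitted child set}, \emph{executed child state} $T(s,a)$, \emph{ledger entry}, and \emph{value of} $\GoalCheck$ are each functions of $(s,a)$ and of the admission and execution code alone, with no argument that reads $V$. I would proceed by inspecting each component in the order it runs at an expansion, showing for each that $V$ does not enter its definition. Then I would argue that substituting a bounded $V'$ changes only the quantities that do depend on $V$.

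First, the admission layer. Its checks are schema, symbols and arity, duplicates via the stable action key, consistency, and direct answer injection. Each is a predicate on the payload $\eta$ and on $(\Gamma_s,\mathcal E_s)$, so the post-filter set $\mathcal C_E(s)$ and the priors $P(s,a)$ are fixed once $s$ and the raw candidates are fixed. Second, execution: by Eq.~\eqref{eq:kernel}, $T(s,a)$ is built on an isolated copy of $s$ (Proposition~\ref{prop:isolation}). It rematches registered premises, runs exact propagation under Assumption~\ref{ass:transition}, and appends provenance tuples $(\mathsf{name},\mathsf{branch},\mathsf{parameters})$ or the \texttt{llm}/\texttt{llm\_construction} tags. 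None of these reads $V$, so the child state and its ledger entry are identical under $V$ and $V'$. Third, $\GoalCheck$ is exact and depends only on $G$ and the child's conditions and equations. This matches the main text's statement that $\GoalCheck$ alone determines closure, whereas $V$ merely aggregates features of the state.

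For the second sentence, I would locate every place $V$ is used: the source return $R_E(s)=V(s)\ind[\cdot]$ in Eq.~\eqref{eq:source}, as in Proposition~\ref{prop:source-credit}, and the backed-up $Q(s,a)$ in Eq.~\eqref{eq:puct}. Both feed only argmax selections. Replacing $V$ by a bounded $V'$ therefore keeps $R_E\in[0,1]$ and $Q$ bounded, so both controllers remain well defined. They may pick different sources and edges, so the sequence of visited states can differ. Every state that is visited, however, is produced by the same $T$ and checked by the same $\GoalCheck$. Consequently any returned goal record has a ledger built from the same transition contract, and Theorem~\ref{thm:soundness} applies unchanged.

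The main obstacle is making ``independent'' honest in the presence of $V$-dependent quantities that are not obviously outside the verification path. The neural proposer sees a context that may include search statistics, and the optional rescue enumeration and budget-based termination depend on how the search went. I would handle this by stating independence \emph{for a fixed visited state and a fixed raw candidate list}. Which candidates get proposed, and which states get visited, are then explicitly counted as part of the visitation policy that $V'$ is allowed to change. With that convention, the remaining work is a code-level check that no filter, executor, or checker function takes $V$, the node value, or the visit counts as an input.
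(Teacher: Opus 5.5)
Your proposal is correct and matches how the paper justifies this proposition, which it does by construction rather than by a standalone proof (only the Interpretation paragraph and the remarks that $\GoalCheck$ alone determines closure and that $V$ merely orders nonterminal states): admission, execution with ledger recording, and goal checking read only the state and the raw candidates, while $V$ enters solely through the source return in Eq.~\eqref{eq:source} and the backed-up $Q$ in Eq.~\eqref{eq:puct}, both of which feed only argmax selections. Two small refinements: for a general bounded $V'$ you get $R_E$ and $Q$ bounded rather than in $[0,1]$, which is all the controllers and the proof of Theorem~\ref{thm:coverage} need; and your worry about search statistics in the proposer's context is unnecessary, since the planner (Prompt E) receives only state-derived inputs, so neural candidates depend on $V$ only through whether and when that source is invoked.
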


\paragraph{Interpretation.}
The bounded progress heuristic orders partial states; exact terminal recognition is
performed by $\GoalCheck$.  Two states may share the same progress features
while differing in whether a complete proof path remains reachable.

\begin{theorem}[Local exploration of persistently live positive-prior children]
\label{thm:coverage}
Fix an expanded node $s$ with a finite, unchanged set of children that remain
live and bounded returns $Q(s,a)\in[0,1]$.  If $s$ is selected infinitely
often, then under Eq.~\eqref{eq:puct} every persistently live child with
$P(s,a)>0$ is selected infinitely often.
\end{theorem}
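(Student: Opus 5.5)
The argument is by contradiction, working only with the PUCT rule Eq.~\eqref{eq:puct} restricted to the fixed finite child set $\mathcal C(s)$. Assume $s$ is selected infinitely often, so $N(s)\to\infty$. Suppose some persistently live child $a_0$ with $P(s,a_0)>0$ is selected only finitely often. Then after some visit index $t_0$, its count $N(s,a_0)=n_0$ is frozen while $N(s)$ keeps growing. Its score is therefore at least
\[
Q(s,a_0)+c_{\rm puct}P(s,a_0)\frac{\sqrt{N(s)}}{1+n_0}\;\ge\; c_{\rm puct}P(s,a_0)\frac{\sqrt{N(s)}}{1+n_0},
\]
using $Q\in[0,1]$. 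This lower bound diverges to $+\infty$ as $N(s)\to\infty$. Since the child set is unchanged, $P(s,a_0)$ is fixed; the post-filter priors are computed once at expansion, and I would state this explicitly as part of the ``unchanged children'' hypothesis.

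Next I bound the scores of the competitors, i.e.\ the children selected infinitely often. Since $\sum_a N(s,a)=N(s)$ (up to the fixed offset from visits that terminate at $s$) and the set is finite, at least one child $a_1$ is selected infinitely often. At each time $a_1$ is selected, it must beat $a_0$, which gives
\[
1+c_{\rm puct}P(s,a_1)\frac{\sqrt{N(s)}}{1+N(s,a_1)}\;\ge\; c_{\rm puct}P(s,a_0)\frac{\sqrt{N(s)}}{1+n_0}.
\]
Because $P(s,a_1)\le 1$ (priors are normalized post-filter), this forces
\[
1+N(s,a_1)\;\le\;\frac{(1+n_0)\,P(s,a_1)}{P(s,a_0)}\cdot\bigl(1+o(1)\bigr)
\]
whenever $a_1$ is selected. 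Hence every infinitely selected child has a uniformly bounded visit count at its selection times, and so a bounded total count. But finitely many children with bounded counts cannot sum to $N(s)\to\infty$, which is a contradiction. Therefore every persistently live positive-prior child is selected infinitely often.

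The main obstacle is the bookkeeping of the hypotheses rather than the inequality itself. ``Persistently live'' must mean that $a_0$ stays in the argmax candidate set at every visit, so it is never pruned by depth, terminal status, or timeout. I also need the tie-breaking rule to be irrelevant, which holds because the argument only uses strict divergence. Finally, I need the $o(1)$ term to be handled: when $N(s,a_1)$ is large, the left side tends to $1$ while the right side diverges, so for large $N(s)$ no child with unbounded count can win. I would make this clean by fixing $M$ so that the right side exceeds $1+c_{\rm puct}$ once $N(s)\ge M$. After that point, any selected child must satisfy $\sqrt{N(s)}/(1+N(s,a))\ge\kappa\sqrt{N(s)}$ with $\kappa=P(s,a_0)/\bigl((1+n_0)\max_a P(s,a)\bigr)\cdot(1-\epsilon)$. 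This yields a uniform bound $N(s,a)\le 1/\kappa$, and the pigeonhole step then gives the contradiction directly.
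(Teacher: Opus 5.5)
Your argument is correct and is essentially the paper's proof. The frozen positive-prior child's exploration term grows like $\sqrt{N(s)}$, while for any infinitely visited child the exploration term divided by $\sqrt{N(s)}$ vanishes, so with $Q\in[0,1]$ the infinitely visited children eventually cannot win the argmax; your explicit bound on $N(s,a_1)$ at its selection times is just a quantitative version of the paper's partition into bounded-count and infinitely visited children. One small fix: the threshold $M$ should be chosen so that $\epsilon\, c_{\rm puct}P(s,a_0)\sqrt{N(s)}\ge 1+n_0$ (not via the ``exceeds $1+c_{\rm puct}$'' phrasing), though your earlier $(1+o(1))$ derivation already handles this correctly.
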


\begin{proof}[Proof sketch]
Within a finite child set, the exploration term of a positive-prior child
with bounded visits eventually dominates those of all infinitely visited
children, accounting for bounded $Q$ values. PUCT must then revisit the set
of children with bounded counts, contradicting the eventual constancy of
those counts.
\end{proof}

\paragraph{Scope.}
The result concerns children already materialized at a
revisited node; source selection at new leaves is governed separately by
Eq.~\eqref{eq:source}.

\subsection{Soundness of Each Transition Class}

\begin{lemma}[Theorem transition]
If $\Gamma\vdash_{\GeoLib}(r,\sigma)\Downarrow(\Delta,\rho_a)$, then every
$\delta\in\Delta$ is a semantic consequence of $\Gamma$ in $\mathfrak M$.
\end{lemma}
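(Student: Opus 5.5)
The plan is to unfold the big-step judgment $\Gamma\vdash_{\GeoLib}(r,\sigma)\Downarrow(\Delta,\rho_a)$ into its constituent stages and discharge each stage with the matching clause of Assumption~\ref{ass:transition}. The judgment is derivable only if the engine rematches the registered branch $r$ under substitution $\sigma$. That means every premise $\pi_i\sigma$ ($i=1,\dots,m$) is found as a record in $\Gamma$, and every incidence or guard condition of $r$ succeeds. For negated guards such as $\neg\gpred{Coplanar}(U,AD)$ in the solid-incidence rule, success means the guard check itself returns true. The conclusion templates are then instantiated and executed to produce $\Delta$, and $\rho_a$ records the premise IDs and the $(\mathsf{name},\mathsf{branch},\mathsf{parameters})$ tuple. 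I would first make this inversion explicit as a small case analysis on the last rule of the derivation, so that $\Downarrow$ has exactly two possible shapes: successful match followed by execution, or failure, which produces no judgment.

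Next, I fix an arbitrary model $\mathcal A$ of $\mathfrak M$ with $\mathcal A\models\Gamma$. Since each matched premise $\pi_i\sigma$ is literally (up to the declaration-level canonical forms) a member of $\Gamma$, $\mathcal A\models\pi_i\sigma$ for all $i$. Here I use that canonical identifications such as $\gpred{Line}(AB)\equiv\gpred{Line}(BA)$ are meaning-preserving, which is the deterministic-extension clause of Assumption~\ref{ass:transition}. The guards are then handled the same way: they are either checked against $\Gamma$ or decided by typed existence/validity checks in $\operatorname{PredicateDef}$, which hold in $\mathcal A$. The first clause of Assumption~\ref{ass:transition} then says that successful premise matching plus conclusion execution of the enabled branch $r$ is sound. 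So $\mathcal A\models\delta$ for every conclusion $\delta$ emitted directly by the schema.

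The remaining case is conclusions that execution adds beyond the literal schema instance. Examples are symbol creation for new attributions, the declared expansions of the conclusion predicate, and any normalization by the algebra backend of an emitted $\gpred{Equal}$. I would split $\Delta=\Delta_{\rm sch}\cup\Delta_{\rm ext}$ and argue by induction on the finite sequence of extension steps. Each step either introduces a fresh symbol, which is conservative and so extends any model, or applies a registered deterministic extension or canonical normalization, which preserves meaning by assumption. Hence every $\delta\in\Delta_{\rm ext}$ also holds in $\mathcal A$, and since $\mathcal A$ was arbitrary, $\Gamma\models_{\mathfrak M}\delta$.

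I expect the main obstacle to be this last step, specifically fresh symbols. Strictly, a model of $\Gamma$ must be expanded to interpret new attribution symbols before the conclusions can hold in it. So "semantic consequence" has to be read as "every model of $\Gamma$ expands to a model of $\Gamma\cup\Delta$", which is equivalent for statements about old symbols. I would state this convention explicitly and justify the expansion through the attribution declarations, under which each attribution symbol denotes a defined quantity of existing objects. Guards evaluated by the engine rather than looked up in $\Gamma$ are a secondary subtlety. I would fold them into the "successful premise matching" hypothesis of Assumption~\ref{ass:transition} rather than prove them separately.
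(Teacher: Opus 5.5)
Your proposal is correct and matches the paper's proof: unfold the execution judgment into branch resolution, premise matching, variable binding and admission of checked conclusions, then apply the first clause of Assumption~\ref{ass:transition}, which the paper reads as validity of $P_r\wedge G_r\Rightarrow C_r$ in $\mathfrak M$. The schema/extension split of $\Delta$ and the model-expansion convention for fresh attribution symbols are extra care the paper omits, since its executor admits only the instantiated conclusions. Absorbing engine-evaluated guards such as $\neg\gpred{Coplanar}(U,AD)$ into that assumption, as in your last paragraph, is the right move (and is what the paper implicitly does), because a guard decided by absence from $\Gamma$ need not hold in every model of $\Gamma$; it should replace your earlier claim that the guards simply hold in $\mathcal A$.
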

\begin{proof}
The executor resolves the registered branch, matches concrete premise IDs,
binds the declared variables, and admits only its checked conclusions.
Assumption~\ref{ass:transition} states that
$P_r\wedge G_r\Rightarrow C_r$ is valid in $\mathfrak M$.  Therefore each
instantiated conclusion follows.
\end{proof}

\begin{lemma}[Exact propagation]
If the equation engine derives $h=0$ solely from recorded parent equations
$f_1=\cdots=f_m=0$, then under Assumption~\ref{ass:transition}
$\{f_i=0\}_{i=1}^m\models h=0$.
\end{lemma}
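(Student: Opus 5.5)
The claim follows directly from the third clause of Assumption~\ref{ass:transition}, so the plan is to reduce the lemma to that clause and then rule out the ways the engine could emit $h=0$ without a certified step. First, I would fix what ``derives $h=0$ solely from recorded parent equations'' means operationally. The ledger entry for $h=0$ lists premise IDs pointing to $f_1=0,\dots,f_m=0$, together with a reserved bookkeeping name for the equation solve. No other condition or cached value enters the computation. By Proposition~\ref{prop:isolation}, equation caches are not part of the semantic state, so they cannot supply hidden premises. The derivation is therefore a function of the parent equations and of the typed domain and sign constraints attached to their symbols.

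Next, I would split the engine's emissions into two cases. The first case is \emph{algebraic rewriting}: normalization, substitution, and elimination of linear or polynomial combinations. Here $h$ lies in the ideal generated by the $f_i$ (after the backend's canonical normalization, which is equivalence-preserving). Any model of $\mathfrak M$ satisfying all $f_i=0$ therefore satisfies $h=0$. The second case is \emph{value extraction} from a solved system, where a symbol is assigned a value $v$ and $h=x-v$. In this case I would invoke the third clause of Assumption~\ref{ass:transition} directly. A value is emitted only when it is entailed by the recorded equations together with their typed domain and sign constraints. Selecting one consistent root of a non-unique system is excluded as uncertified, so the surviving root is the unique one compatible with those constraints in every model. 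Hence every model of $\{f_i=0\}$ (interpreted with the declared types) satisfies $h=0$.

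The main obstacle is the second case: the sign and domain constraints are themselves facts. For the entailment $\{f_i=0\}\models h=0$ to be read literally, I must argue that these constraints are not extra premises but part of the typed semantics in $\mathfrak M$. For example, lengths are positive reals because $\gpred{LengthOfLine}$ is an attribution whose declared type carries that constraint. Every model of the compiled signature already satisfies them, so the entailment holds in $\mathfrak M$ without listing them among the premises. Any constraint that is not declarational would instead have to appear as a recorded parent. I would then conclude by combining the two cases, noting that chains of solves compose through transitivity of $\models$.
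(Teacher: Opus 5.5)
Your proposal is correct and follows the paper's proof, which simply applies the exact-propagation clause of Assumption~\ref{ass:transition} to the recorded parents $f_1,\dots,f_m$. Your remark that the typed domain and sign constraints must be read as part of the intended semantics $\mathfrak M$ makes explicit a step the paper skips when it paraphrases that clause as preserving consequence from the parents alone. The rewriting/value-extraction split is unnecessary, though, and it is not exhaustive: the claim that $h$ lies in the ideal of the $f_i$ is an unverified assertion about the engine, and cancelling a factor that is nonzero only by its type yields neither an ideal member nor a value. It is cleaner to apply the assumption uniformly to every emitted equation, as the paper does.
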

\begin{proof}
Assumption~\ref{ass:transition} specifies that exact equation propagation preserves semantic
consequence with respect to its recorded parent equations.  Applying this
contract to $f_1,\ldots,f_m$ yields the stated implication.
\end{proof}

\begin{lemma}[Neural bridge transition]
If $\Gamma\Vdash_{\rm filt}a\Downarrow\Delta$ for a neural construction or
asserted equation, its typed semantic contract is
$\Gamma\wedge\Delta\models\Delta$.
\end{lemma}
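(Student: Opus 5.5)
The plan is to read the statement as a claim about what the neural bridge transition does \emph{not} certify, and to discharge it in two layers: a tautological semantic layer and an operational layer that fixes where the bridge facts sit in the ledger. The entailment $\Gamma\wedge\Delta\models\Delta$ holds in $\mathfrak M$ for any $\Gamma$ and $\Delta$ by the definition of semantic consequence: every model of $\Gamma\cup\Delta$ is in particular a model of $\Delta$. The content of the lemma is therefore the contrast with the theorem-transition lemma. There, $\Delta$ is a consequence of $\Gamma$ alone. Here, $\Delta$ is only available as an explicit hypothesis. I would therefore spend the proof on showing that the implementation never treats $\Delta$ as derived from $\Gamma$.

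First, I would unfold $\Gamma\Vdash_{\rm filt}a\Downarrow\Delta$ using the admission layer of Section~\ref{sec:state}. The checks on schema, symbols and arity, duplicates, consistency, and direct answer injection are all purely syntactic or local. Each either rejects $a$ or passes it unchanged to execution, so none of them contributes a semantic claim.

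Second, I would inspect execution of a construction or asserted equation, as in the midpoint example. Execution creates the missing symbols and declared extensions, then inserts the resulting facts with provenance tag \texttt{llm\_construction} or \texttt{llm}. These facts are the members of $\Delta$, and every one of them is recorded in the open set $U$. Any deterministic extensions generated alongside them fall under Assumption~\ref{ass:transition}, which says they preserve the meaning of their source facts. Those source facts are in $\Delta$, so the extensions are entailed by $\Gamma\wedge\Delta$ as well.

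Third, I would conclude: every fact added by the transition is either a member of $\Delta$ or entailed by $\Delta$ through certified extensions. The semantic contract is therefore exactly $\Gamma\wedge\Delta\models\Delta$. No stronger judgement such as $\Gamma\models\Delta$ is asserted, which is precisely why these facts enter $U_g$ in Theorem~\ref{thm:soundness} rather than being treated as grounded.

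The main obstacle is the completeness of tagging, not the entailment itself. I must ensure that no side effect of a neural action, such as an internal equation solve triggered during insertion, emits a fact without either the open tag or a certified parent chain. I would try to handle this through the provenance ledger: by the ledger's rule, every recorded fact carries one of three kinds of provenance — premise marker $-1$, a theorem tuple, or a reserved bookkeeping name tied to a certified extension or solve. Anything that does not fall under these is an \texttt{llm} or \texttt{llm\_construction} fact, and so belongs to $\Delta$. Proposition~\ref{prop:isolation} then confines the effect of the transition to the child state, so the contract is local to the subtree rooted at that child.
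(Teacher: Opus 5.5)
Your proposal is correct and matches the paper's argument: the entailment is tautological, the admission tests (parser, novelty, anti-injection, consistency) only constrain the form of $\Delta$, and recording $\Delta$ among the open bridge facts is what supplies it as an explicit assumption for Theorem~\ref{thm:soundness}. Your additional analysis of extensions and tagging completeness for side effects goes beyond what the paper's two-line proof uses, and fits more naturally under the soundness theorem. It is nonetheless consistent with the ledger description in Appendix~\ref{app:language}.
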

\begin{proof}
The parser, novelty, anti-injection, and consistency tests constrain the form
of $\Delta$.  Recording $\Delta$ in the bridge set supplies the assumption
used by the conditional implication.
\end{proof}

\begin{proof}[Proof of Theorem~\ref{thm:soundness}]
Take the transitive premise ancestry of
the returned goal.  Replace every theorem or exact-propagation node by its
sound implication and every open node by an assumption in $U_g$.  Composition
over this finite acyclic dependency relation yields
Eq.~\eqref{eq:obligation-soundness}.
\end{proof}

\subsection{State Isolation and Local PUCT Exploration}

\begin{proof}[Proof of Proposition~\ref{prop:isolation}]
Let $C_s$ denote the mutable condition
object installed for parent $s$.  Child materialization computes
$C'=\mathrm{deepcopy}(C_s)$, installs $C'$, applies the action, and stores $C'$
in the child state.  It then reinstalls the saved pointer $C_s$ and the saved
goal tuple.  Another child starts from a new deep copy of $C_s$, never from
$C'$.  Induction over child creation proves parent and sibling isolation.
\end{proof}

\Needspace{17\baselineskip}
\begin{proof}[Proof of Theorem~\ref{thm:coverage}]
Partition the fixed finite child set into
children with bounded visit counts and those visited infinitely often.
Suppose the first set contains a child $a$ with $P(s,a)>0$ and
$N(s,a)\leq M$. Its PUCT exploration term is at least
\begin{equation}
c_{\rm puct}P(s,a)\frac{\sqrt{N(s)}}{1+M},
\end{equation}
which grows proportionally to $\sqrt{N(s)}$. For each infinitely visited
child, the exploration term divided by $\sqrt{N(s)}$ tends to zero; the same
holds for every bounded $Q$ term. Since the child set is finite, PUCT must
eventually select from the set with bounded visit counts. Yet the counts in
that finite set must eventually be constant, a contradiction. Thus every
persistently live positive-prior child is visited infinitely often.
\end{proof}

\subsection{Finite Budgets and Source Accounting}

\paragraph{Finite budgets.}
Each neural proposal invocation increments the round counter after the
two-call attempt, whether parsing succeeds or fails.  Hence the guard
$\textit{rounds}<L$ can be true at most $L$ times.  Symbolic enumeration is
wrapped by a per-call alarm; the root receives at most one longer rescue
attempt when the first enumeration returns no child and sufficient engine time
remains.  Search depth is capped at $D$, and the runner enforces an outer hard
deadline independent of the MCTS loop.

\paragraph{Source accounting.}
For every attempted source, the implementation evaluates the current leaf,
computes Eq.~\eqref{eq:source} from $V(s)$ and expansion success, and
updates exactly one count/value pair from the expanded leaf through its
ancestors.  An empty primary and a successful fallback therefore create two
separate observations, $(E_0,0)$ and $(E_1,V(s))$.  Repeated symbolic
enumeration inside the one-time root rescue is folded into the same symbolic
observation.  A descendant with no local history reads information from the
closest ancestor with nonzero counts.  This produces the path-conditioned
online allocation rule in Eq.~\eqref{eq:source}; the local child-exploration
result is supplied separately by Theorem~\ref{thm:coverage}.

\section{Implementation and Evaluation Details}
\label{app:impl}

This section describes the neural interfaces that implement the transition
contract, then specifies the evaluation protocol and shared search settings.

\subsection{Neural Backend, Prompts, and Token Accounting}

We evaluate two primary neural-backend configurations: Claude Sonnet 4.6 and
Qwen3-VL-8B-Thinking.  Within each configuration, all neural calls use the same
off-the-shelf backend and temperature 0.  The Claude configuration uses the
\texttt{claude-sonnet-4-6} model through an OpenAI-compatible Chat Completions
endpoint with \texttt{openai==2.31.0} and a 200k-token context window.  Both
configurations keep the selected checkpoint frozen throughout compilation
and search, without geometry-specific training.

Table~\ref{tab:neural-interface} gives the complete call interface, and
Appendix~\ref{app:prompts} reproduces the fixed prompt contracts.  Across
compiler calls, the templates require the declared JSON schema and names from
the supplied \GeoLib whitelist; the in-search extractor additionally forbids
a direct goal-answer equation.  Stage~1 receives the
statement, optional raster diagram, allowed construction symbols, and
representative typed examples, and returns a JSON list of construction declarations.
Stage~2 additionally receives Stage~1 and returns
\texttt{text\_geodl}/\texttt{image\_geodl}.  Stage~3 receives the statement and
compiled conditions and returns a typed \texttt{goal\_geodl} plus the auxiliary
model-generated \texttt{problem\_answer} string $\tilde y_\theta$ described in
Appendix~\ref{app:eval}; $y^\star$ is not included.  The critic receives the
same raw statement and candidate \GeoDL object and
returns \texttt{\{"ok": bool, "missing": string\}}.  During search, the
planner receives the current conditions, known values, equations, goal, and
backward subgoal hints.  The extractor receives that plan plus the engine
symbol table and returns at most three typed actions with priors.

\begin{table*}[t]
\centering
\caption{Neural call interface used in every benchmark.  ``Cap'' is the
maximum completion length per call.}
\label{tab:neural-interface}
\setlength{\tabcolsep}{4pt}
\resizebox{\textwidth}{!}{%
\begin{tabular}{p{0.14\linewidth}p{0.31\linewidth}p{0.34\linewidth}cc}
\toprule
Role & Inputs & Required output & Cap & Retry/round limit\\
\midrule
Construction compiler & statement, optional image, typed whitelist, examples & JSON list of construction \GeoDL strings & 4096 & 1\\
Condition compiler & statement, image, construction \GeoDL, typed whitelist, examples & JSON object with text/image \GeoDL lists & 4096 & 1\\
Goal compiler & statement and compiled \GeoDL & typed \texttt{goal\_geodl} and model-generated $\tilde y_\theta$ & 4096 & 1\\
Compiler critic & statement and candidate \GeoDL & JSON \texttt{ok}/\texttt{missing} record & 2048 & 2 rounds\\
Search planner & current formal state, equations, goal, subgoal hints & geometry plan naming concrete next steps & 8192 & 3 rounds/problem\\
Action extractor & plan, engine symbols, equations, action schema & $\leq3$ typed actions with scores in $[0,1]$ & 4096 & paired with planner\\
\bottomrule
\end{tabular}}
\end{table*}

\subsection{Reproducible Prompt Templates}
\label{app:prompts}

The following boxes reproduce the prompt instructions used by the released
implementation, with only line wrapping and paper notation normalized.
Tokens in angle brackets are populated at runtime from
the active domain, \GeoLib whitelist, few-shot file, candidate formalization,
or current search state. The boxes therefore specify the instruction and
output contracts while leaving problem-dependent symbol tables as inputs. Retry
feedback is appended only after a failed parse or engine attempt.  No
benchmark reference answer is inserted into any prompt.  The implementation
retains legacy JSON keys ending in \texttt{\_cdl}; these keys carry the
corresponding \GeoDL channels and do not denote a second formal language.

\begin{promptbox}{Prompt A: Construction compiler}
You parse a natural-language <DOMAIN> geometry problem into the
construction part of a GeoDL specification.

OUTPUT FORMAT: a single JSON array of strings. Reply with ONLY the
JSON array; no preamble, explanation, markdown, or trailing prose.
The first character must be [ and the last must be ]. Each element
names one entity or one multi-entity construction.

Allowed construction predicates: <CONSTRUCTION_WHITELIST>
Allowed basic entities: <BASIC_ENTITY_WHITELIST>
Allowed compound entities: <COMPOUND_ENTITY_WHITELIST>

CONSTRAINTS:
1. Use only registered predicates and entities.
2. Balance every pair of parentheses before emitting a GeoDL string.
3. If a figure is attached, use it only for structural facts that the
   text does not state (e.g., adjacency, collinearity, incidence, or
   cyclic vertex order).
4. Text labels are canonical. Never introduce a point label found only
   in the figure, and never invent a shape absent from the statement.
5. Preserve the granularity of named objects; do not decompose a solid
   or polygon into unsupported sub-objects.
6. Apply <DOMAIN_CONSTRUCTION_GUIDE>.

Examples:
<CONSTRUCTION_EXAMPLES>

NL: <NATURAL_LANGUAGE_STATEMENT>
[Optional figure attached]
[Optional parser feedback: <PREVIOUS_FAILURE>]
CONSTRUCTION:
\end{promptbox}

\begin{promptbox}{Prompt B: Condition compiler}
You parse the relations, equations, and inequalities of a <DOMAIN>
GeoDL specification, given the natural-language statement and the
already extracted construction.

OUTPUT FORMAT:
- With a figure: exactly
  {"text_cdl":["..."], "image_cdl":["..."]}
- Without a figure: exactly {"text_cdl":["..."]}
Return ONLY the JSON object; no prose or markdown.

Allowed relations: <RELATION_WHITELIST>
Allowed attributions: <ATTRIBUTION_WHITELIST>
Allowed operations: Add, Sub, Mul, Div, Pow, Mod, Sqrt, Sin, Cos,
                    Tan, Abs, Neg, ...
Equation form: Equal(LHS,RHS)
Inequality forms: Greater, Less, GreaterEqual, LessEqual, Negation

CONSTRAINTS:
1. text_cdl contains every relation, equation, and inequality stated
   or unambiguously implied by the text. The text is authoritative.
2. With a figure, image_cdl begins with an identical, ordered copy of
   text_cdl and then appends only figure-specific structural facts.
3. Use the same labels in every channel. Drop figure-only labels and
   any image fact that contradicts the text.
4. Do not invent numeric values or fresh placeholder symbols. Values
   in examples are style demonstrations, not sources for this problem.
5. Encode expressions with registered operators (e.g., Sqrt, Sin),
   and balance every pair of parentheses.
6. Translate implicit geometry into registered relations: incidence,
   collinearity at intersections, midpoint, perpendicular foot,
   perpendicular bisector, tangency, and cocircularity when stated.
7. Apply <DOMAIN_CONDITION_GUIDE>.

Examples:
<CONDITION_EXAMPLES>

NL: <NATURAL_LANGUAGE_STATEMENT>
CONSTRUCTION: <CONSTRUCTION_CDL>
[Optional figure and parser feedback: <PREVIOUS_FAILURE>]
JSON:
\end{promptbox}

\begin{promptbox}{Prompt C: Goal compiler}
You extract the goal of a <DOMAIN> geometry problem and a
model-generated answer string used only for format compatibility.

OUTPUT FORMAT: exactly
{"goal_cdl":"<ONE_STRING>", "problem_answer":"<ONE_STRING>"}
Return ONLY this JSON object; no prose or markdown.

goal_cdl starts with one of:
Value, FindRange, FindMin, FindMax, FindExpression, FindLocus,
FindCoordinate, Equal, or Relation.

CONSTRAINTS:
1. For a prove/Relation goal, set problem_answer to "0".
2. FindExpression(X) takes one entity name X. To request the value of
   an arithmetic expression, use Value(<EXPRESSION>).
3. Use only <ATTRIBUTION_WHITELIST> and <OPERATION_WHITELIST>.
4. Balance every pair of parentheses in goal_cdl.
5. problem_answer is inferred by the model; the benchmark reference
   answer is never supplied to this prompt or to the solver.

Examples:
<GOAL_EXAMPLES>

NL: <NATURAL_LANGUAGE_STATEMENT>
CONSTRUCTION: <CONSTRUCTION_CDL>
TEXT: <TEXT_CDL>
IMAGE: <IMAGE_CDL>
[Optional parser feedback: <PREVIOUS_FAILURE>]
JSON:
\end{promptbox}

\begin{promptbox}{Prompt D: Compiler critic and repair diagnosis}
You are a critic that checks whether a GeoDL specification correctly
and completely captures a natural-language <DOMAIN> problem.

Audit ONLY construction_cdl, text_cdl, and image_cdl. Do not propose
changes to goal_cdl or problem_answer; the goal is fixed separately.
Enumerate every entity, numeric or symbolic value, relation, equation,
inequality, and implicit geometric convention in the statement, and
compare them with the candidate GeoDL.

OUTPUT FORMAT: exactly
{"ok":true|false, "missing":"<EMPTY_IF_OK; OTHERWISE_DIAGNOSIS>"}
Return ONLY this JSON object; no prose or markdown.

If <ENGINE_FEEDBACK> is present, diagnose the failure in concrete
GeoDL terms. Distinguish an unregistered operator, an undeclared
entity, a missing incidence/construction, a name collision, and an
accepted but underconstrained goal. In missing, tell the next compiler
pass which line(s) to add or replace, verbatim where possible.

Apply <DOMAIN_CRITIC_GUIDE>.
Examples: <CRITIC_EXAMPLES>

NL: <NATURAL_LANGUAGE_STATEMENT>
CDL: <CANDIDATE_CDL_OBJECT>
[ENGINE FEEDBACK: <SOLVED, DERIVED_ANSWER, ERROR>]
[PREVIOUS AUDIT AND PATCH FAILURE: <RETRY_FEEDBACK>]
JSON:
\end{promptbox}

\begin{promptbox}{Prompt E: State-conditioned search planner}
SYSTEM:
You are an expert in <DOMAIN> geometry.
- Plane: triangles, polygons, circles, parallel/perpendicular lines,
  angle chasing, similarity, congruence, and classical centers.
- Analytic: conics, circles, lines, coordinates, slopes, distances,
  and conic-parameter identities.
- Solid: cones, cylinders, prisms, pyramids, cubes, spheres,
  cross-sections, and line/plane relations.

You are given a problem and the symbolic engine's current facts and
equations. Reason step by step toward the goal. Name the specific
registered theorem or geometric identity for each proposed step and
state the concrete entities, coordinates, or equations involved.
Produce reasoning only; this call does not mutate the engine state.

USER STATE:
Problem: <NATURAL_LANGUAGE_STATEMENT>

Conditions:
  <CONSTRUCTION_AND_TEXT_CDL>

Known:
  <SYMBOL (ATTRIBUTION(ARGS)) = VALUE>

Equations:
  <ENGINE_EXPRESSION = 0>

Goal: <GOAL_CDL>
<OPTIONAL_BACKWARD_SUBGOAL_HINTS>
\end{promptbox}

\begin{promptbox}{Prompt F: Typed action extractor}
You are the INJECTION step of a neural-symbolic geometry solver. A
planner has already reasoned about the problem. Select the SINGLE most
important next fact it established and encode that fact as one engine
action. Emit multiple actions only when several relations must hold
together for that one step; emit at most three actions.

Copy every engine symbol exactly from the supplied state tables; never
invent or guess a symbol. Copy theorem names verbatim from the planner.

OUTPUT: ONLY a JSON array. Every object is one of:
{"kind":"theorem", "name":"<NAME>", "branch":<INT_OR_NULL>,
 "params":["A","B","C"], "prior":<0_TO_1>, "why":"<TEXT>"}
{"kind":"assert_equation", "expr":"<EXPR_IN_=0_FORM>",
 "prior":<0_TO_1>, "why":"<TEXT>"}
{"kind":"assert_equations", "exprs":["<E1>","<E2>"],
 "prior":<0_TO_1>, "why":"<TEXT>"}
{"kind":"construction", "construction":"Midpoint|Line|Collinear|Point",
 "args":["M","A","B"], "prior":<0_TO_1>, "why":"<TEXT>"}

Use assert_equations when relations must be solved jointly. Prefer
linear relations when available. RED LINE: never inject the answer.
An asserted equation whose only free symbols are goal symbols is
rejected; every proposed relation must connect the goal to other
engine symbols. This test is applied to the union of symbols in a
joint assertion.

Goal: <GOAL_CDL>
Engine state -- known symbols: <KNOWN_SYMBOL_TABLE>
Engine state -- unknown symbols: <UNKNOWN_SYMBOL_TABLE>
Existing equations: <ENGINE_EQUATIONS>
Available constructions: Midpoint, Line, Collinear, Point
Available operations: Add, Sub, Mul, Div, Pow, Sqrt, Sin, Cos,
                      Tan, Abs, Neg

Reasoning from the planner:
<PLANNER_REASONING>

Output ONLY a JSON array of actions.
\end{promptbox}

The default limits allow one additional retry per compiler stage, one additional
end-to-end compiler attempt, and two critic rounds per attempt.  Each response
contributes prompt/completion/total token counts and a call count to the
per-problem record; the searcher separately accumulates neural wall time.  The
search permits at most three planner/extractor pairs, corresponding to at most
six semantic in-search calls and 36,864 completion tokens;
transport retries are logged separately and do not create additional search
rounds.

\subsection{Input and Evaluation Protocol}
\label{app:eval}

Compiler and search inputs are limited to the benchmark statement and, for
multimodal problems, the official raster diagram.  The held-out answer
$y^\star$ is accessed by the scorer only after inference.  FormalGeo7K and
SolidFGeo use their official statement--diagram pairs, whereas Conic10K uses
its natural-language statement because its supplied schema contains no raster
diagram.  The separate IMO-AG-30 study uses the official diagram input.
General-purpose baselines receive the same modality as \OmniGeo on each split.
The compiler-generated $\tilde y_\theta$ is stored separately from the
benchmark label.

Each result record stores the resolved image path or manifest identifier.  For
the plane and solid multimodal protocols, the runner fails closed if the
official image cannot be resolved.  Text conditions can populate the image
channel only in an explicitly labeled text-only ablation, preventing silent
modality substitution in the main evaluation.

\paragraph{General-purpose baseline inference budget.}
Each general-purpose language-model baseline uses one semantic model call per
problem, with retries reserved for failed calls. Once a valid response is
returned, inference terminates; a
failed call may be retried up to five times with the identical prompt and
decoding configuration, and these retries are not treated as additional
independent samples.  The maximum completion length is 4096 tokens,
temperature is 0, and each individual attempt has a 180-second timeout.  Thus
the baseline budget contains no self-consistency voting or iterative
solver-feedback loop beyond failed-call recovery.

\clearpage
\subsection{Search Hyperparameters}

\begin{table}[htbp]
\centering
\caption{Shared inference settings.}
\label{tab:hyperparameters}
\resizebox{\columnwidth}{!}{%
\begin{tabular}{lc}
\toprule
Parameter & Value\\
\midrule
Nominal search cap & 600 s/problem\\
Worker watchdog grace & 30 s\\
$c_{\rm puct}$ & 1.5\\
$c_{\rm meta}$ & 1.5\\
Maximum tree depth & 15\\
Neural expansion rounds & 3/problem\\
Symbolic enumeration cap & 120 s/call\\
Proposal temperature & 0\\
Additional per-step retries & 1\\
Additional pipeline attempts & 1\\
Critic rounds & 2/attempt\\
Symbolic child prior & uniform\\
Post-filter prior & re-normalized over admitted children\\
Outer source return & $V(s)$ if expansion is nonempty; $0$ otherwise\\
Value weights & $(5,2,0.6,1,0.6,-0.4)$; bias $-0.8$\\
\bottomrule
\end{tabular}
}
\end{table}

The progress weights are set manually and held fixed across benchmarks. The
first five coefficients reward solved status, concrete goal bindings,
goal-related equations, matched backward subgoals, and equation growth.  The
sixth is a depth penalty.  Terminal states are set by the engine goal checker;
the value function orders nonterminal search states and defines only the
bounded progress signal used in source credit. Each output record stores the
source statistics and search constants for that problem, allowing aggregate
statistics to be recomputed from the saved records.

\subsection{Search-Control Sensitivity}
\label{app:sensitivity}

We vary one exploration constant while holding the other at $1.5$.  The sweep
uses $\{0.5,1.0,1.5,2.0,3.0\}$ for both $c_{\rm puct}$ and $c_{\rm meta}$.
The shared setting $c_{\rm puct}=c_{\rm meta}=1.5$ gives the highest observed
accuracy in all three geometry domains.  Analytic geometry remains relatively
stable around this point, while solid geometry responds more strongly to the
outer source-allocation constant.

\begin{figure}[ht]
  \centering
  \includegraphics[width=0.72\linewidth]{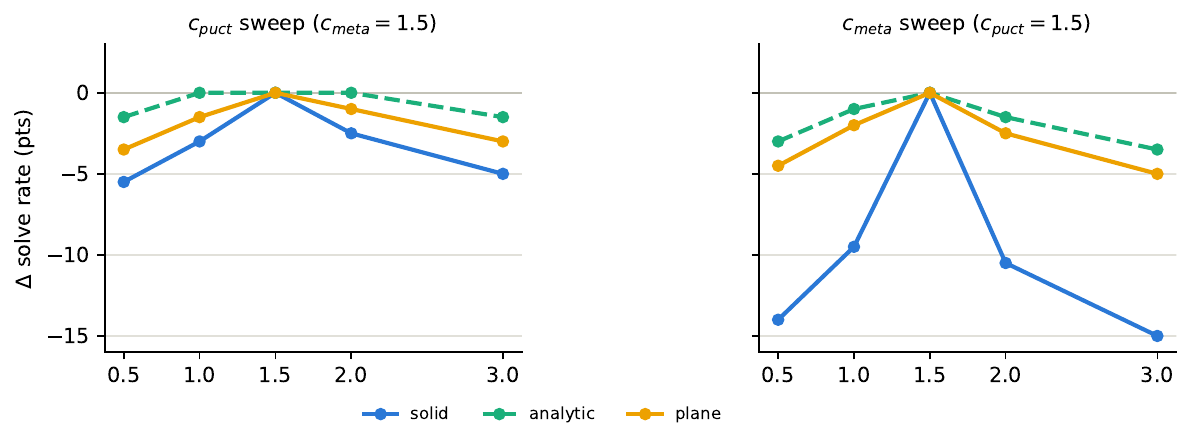}
  \caption{Sensitivity to the inner PUCT constant $c_{\rm puct}$ and the outer
  source-allocation constant $c_{\rm meta}$.  Each sweep varies one constant and
  holds the other at $1.5$.}
  \label{fig:search-sensitivity}
\end{figure}

\end{document}